\def\eqref#1{equation~\ref{#1}}
\def\1{\bm{1}}
\DeclareMathAlphabet{\mathsfit}{\encodingdefault}{\sfdefault}{m}{sl}
\SetMathAlphabet{\mathsfit}{bold}{\encodingdefault}{\sfdefault}{bx}{n}
\newcommand{\E}{\mathbb{E}}
\useunder{\uline}{\ul}{}
\newtheorem{theorem}{Theorem}[section]
\newtheorem{lemma}[theorem]{Lemma}
\newtheorem*{theorem*}{Theorem}
\newtheorem*{lemma*}{Lemma}
\newtheorem{remark}[theorem]{Remark}
\newtheorem{definition}[theorem]{Definition}
\newtheorem{corollary}[theorem]{Corollary}
\newtheorem*{proposition*}{Proposition}
\newcommand{\ip}[2]{\left\langle #1, #2 \right \rangle}
\def\e1{\mathbf{e}_1}
\def\r1{\mathbf{r}_1}
\title{Differentially Private Decoupled Graph Convolutions for Multigranular Topology Protection}
\author{
  Eli Chien\thanks{Equal contribution.} \\
  UIUC\\
  \texttt{ichien3@illinois.edu} \\
  \And
  Wei-Ning Chen$^*$ \\
  Stanford University \\
  \texttt{wnchen@stanford.edu} \\
  \And
  Chao Pan$^*$ \\
  UIUC\\
  \texttt{chaopan2@illinois.edu} \\
  \AND
  Pan Li \\
  Georgia Institute of Technology \\
  \texttt{panli@gatech.edu} \\
  \And
  Ayfer Özgür \\
  Stanford University \\
  \texttt{aozgur@stanford.edu} \\
  \And
  Olgica Milenkovic \\
  UIUC\\
  \texttt{milenkov@illinois.edu}
}
\begin{document}

\maketitle
\begin{abstract}
Graph Neural Networks (GNNs) have proven to be highly effective in solving real-world learning problems that involve graph-structured data. However, GNNs can also inadvertently expose sensitive user information and interactions through their model predictions. To address these privacy concerns, Differential Privacy (DP) protocols are employed to control the trade-off between provable privacy protection and model utility. Applying standard DP approaches to GNNs directly is not advisable due to two main reasons. First, the prediction of node labels, which relies on neighboring node attributes through graph convolutions, can lead to privacy leakage. Second, in practical applications, the privacy requirements for node attributes and graph topology may differ. In the latter setting, existing DP-GNN models fail to provide multigranular trade-offs between graph topology privacy, node attribute privacy, and GNN utility. To address both limitations, we propose a new framework termed Graph Differential Privacy (GDP), specifically tailored to graph learning. GDP ensures both provably private model parameters as well as private predictions. Additionally, we describe a novel unified notion of graph dataset adjacency to analyze the properties of GDP for different levels of graph topology privacy. Our findings reveal that DP-GNNs, which rely on graph convolutions, not only fail to meet the requirements for multigranular graph topology privacy but also necessitate the injection of DP noise that scales at least linearly with the maximum node degree. In contrast, our proposed Differentially Private Decoupled Graph Convolutions (DPDGCs) represent a more flexible and efficient alternative to graph convolutions that still provides the necessary guarantees of GDP. To validate our approach, we conducted extensive experiments on seven node classification benchmarking and illustrative synthetic datasets. The results demonstrate that DPDGCs significantly outperform existing DP-GNNs in terms of privacy-utility trade-offs. Our code is publicly available\footnote{\url{https://github.com/thupchnsky/dp-gnn}}.
\end{abstract}

\section{Introduction}

Graph learning methods, such as Graph Neural Networks (GNNs)~\cite{bruna2014spectral,kipf2017semisupervised,hamilton2017inductive,veličković2018graph,gilmer2017neural}, are indispensable learning tools due to the ubiquity of graph-structured data and their importance in solving real-world problems arising in recommendation systems~\cite{ying2018graph}, bioinformatics~\cite{zhang2021graph_bio} and fraud detection~\cite{liu2020alleviating}. Graph datasets, which typically comprise records of users (i.e., node attributes) and their interaction patterns (i.e., graph topology), often contain sensitive information. For example, financial graph datasets contain sensitive financial records and transfer logs between accounts~\cite{liu2018heterogeneous,zhong2020financial}. Online review system graphs comprise information about customer identities and co-purchase records~\cite{wang2019fdgars}. 

Given the sensitive nature of graph datasets, it is paramount to ensure that learned models do not reveal information about user attributes or interactions. Unprotected learning models can inadvertently leak information about the training data even if the data itself is not disclosed to the public~\cite{carlini2019secret}. Differential privacy (DP) has been used as \emph{the} gold standard for rigorous quantification of ``privacy leakage'' of a learning algorithm, 
as it ensures that the output of a training algorithm remains indistinguishable from that of ``adjacent'' datasets~\cite{dwork2006calibrating}. Classical DP approaches focus on DP guarantees for model weights (i.e., DP-SGD \cite{abadi2016deep}) when only node attributes are present. This suffices to ensure DP of model predictions for standard classification settings when no graph structure is available. In such settings, the prediction of a user is merely a function of model weights and its own attributes. This assumption does not hold for graph learning methods in general, since graph convolution, by coupling node attributes and topology information, also leverages information from neighboring users at the prediction stage. Furthermore, in practice, there are usually different privacy level requirements for node attributes and graph topology. For example, customer identities can be more sensitive compared to co-purchase records in an online review system. A fine-grained trade-off between graph topology privacy, node attribute privacy, and GNN utility is a necessary consideration overlooked by prior literature. These require rethinking how to achieve adequate DP for GNNs.

\begin{figure}[t]
    \centering
    \includegraphics[trim={0cm 8.1cm 9.1cm 3.2cm},clip,width=\linewidth]{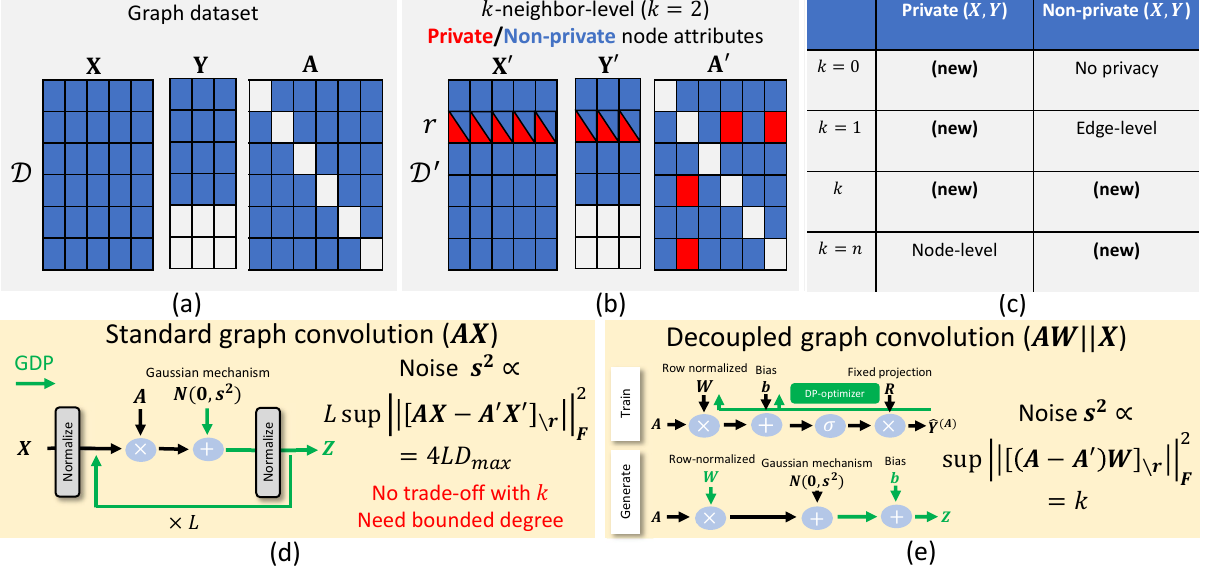}
    \vspace{-0.8cm}
    \caption{Top: (a) Illustration of a training graph dataset. In the example, the graph involves $6$ nodes and does not contain self-loops. Nodes $5$ and $6$ are left unlabeled in the training dataset $\mathcal{D}$. (b) An illustration of our novel notion of $k$-neighbor-level graph dataset adjacency, with/without node attributes privacy. Red colors indicate entries that are to be replaced in the adjacent dataset $\mathcal{D}^\prime$ with respect to a node $r$. (c) All possible combinations of graph topology and node attribute privacy requirements under our $k$-neighbor definition. Bold letters indicate settings not covered by prior literature. Bottom: Illustration of a (d) standard graph convolution and (e) our decoupled graph convolution design. For decoupled graph convolution, we concatenate the node embedding $\mathbf{Z}$ with node feature $\mathbf{X}$ to obtain the final prediction. See Figure~\ref{fig:GDP_baselines} for a more detailed description of the DPDGC model. Note that the required noise for standard graph convolution is independent of $k$, and hence cannot leverage the intrinsic privacy-utility trade-off in $k$-neighbor level adjacency.}
    \label{fig:All_settings}
    \vspace{-0.5cm}
\end{figure}

\paragraph{Our contributions.} We perform a formal analysis of Graph Differential Privacy (GDP), ensuring that both the GNN weights and node predictions are DP. The key idea of GDP is to protect the privacy of all nodes at the prediction step except those nodes whose labels are to be predicted, as users who want to know their own predictions must have access to their own data. While this requirement sounds straightforward, it leads to challenges in the analysis due to the interaction between the graph structure and node attributes. To account for different degrees of graph topology and node attribute privacy, we introduce the notion of $k$-neighbor-level adjacency which generalizes the notion of graph dataset adjacency (see Figure~\ref{fig:All_settings} (a-c)). It not only unifies previous edge- and node-level adjacency definitions~\cite{daigavane2021node,chien2022certified} but also allows for different granularities of privacy protection for the graph topology. Our notion of $k$-neighbor-level adjacency can be used to establish the trade-offs among graph topology privacy, node attribute privacy, and model utility. 

The GDP analysis to follow demonstrates that the standard graph convolution operation has two fundamental drawbacks. First, it can be shown that the required DP noise level for a standard graph convolution \emph{does not} decay even when there are no privacy constraints on the graph topology. Hence, the standard graph convolution design fails to exploit the fine-grained trade-off pertaining to graph topology privacy. Second, the required DP noise variance for standard graph convolutions grows at least linearly with the maximum node degree~\cite{sajadmanesh2023gap}, which leads to suboptimal utility. To mitigate these drawbacks, we propose the Differentially Private Decoupled Graph Convolution (DPDGC) design that provably enables the aforementioned trade-off and makes DP noise variance \emph{independent} of the maximum node degree. Our key idea is to prevent direct neighborhood aggregation of (potentially transformed) node features so that the GDP guarantee can be improved via the DP composition theorem (see Figure~\ref{fig:All_settings} (e)). This insight sheds new light on the benefits of decoupled graph convolutions, and may lead to further advances in GDP-aware designs. We conclude by demonstrating excellent privacy-utility trade-offs of DPDGC for different GDP settings via extensive experiments on seven node classification benchmarking datasets and synthetic datasets generated using the contextual stochastic block model (cSBM)~\cite{deshpande2018contextual,chien2021adaptive}.

Missing proofs and details are relegated to the Appendix due to space limitations.

\section{Related works}\label{sec:related}

\textbf{DP for neural networks and classical graph privacy analysis. }Providing rigorous privacy guarantees for ML methods is a problem of significant interest~\cite{sarwate2013signal, bassily2014private, shokri2015privacy, wu2017bolt}, where DP gradient descent algorithms and their variants were studied in~\cite{abadi2016deep}. On the other hand, classical graph analysis with DP guarantees has been extensively studied previously. For example, problems of releasing graphs or their statistics with DP guarantees were examined in~\cite{sala2011sharing} and~\cite{qin2017generating}. In a different setting,~\cite{hay2009accurate} investigated the problem of estimating degree distributions with DP guarantees, while~\cite{epasto2022differentially} studied DP PageRank algorithms. The interested reader is referred to the survey~\cite{mueller2022sok} for a more comprehensive list of references. 

\textbf{DP-GNNs. }Several attempts were made to establish formal GNN privacy guarantees via DP.~\cite{wu2022linkteller} propose a strategy achieving edge DP by privatizing the graph structure (i.e., perturbing the topology) before feeding it into GNNs.~\cite{daigavane2021node} describe a node DP approach for training general GNNs via extensions of DP-SGD, while~\cite{zhang2022towards} suggests to combine DP PageRank with DP-SGD GNN training. These approaches can only guarantee DP model weights and fail to provide DP model predictions.~\cite{mueller2022differentially} study GNNs for graph classification instead of node classification.~\cite{olatunji2021releasing} propose node-level private GNNs via the Private Aggregation of Teacher Ensembles framework~\cite{papernot2016semi}, which is a different setting than ours.~\cite{sajadmanesh2023gap} are the first to point out the importance of ensuring DP of GNN predictions, and to propose the GAP model for this purpose. Still, their work did not include a formal GDP analysis nor a study of the benefits of decoupled graph convolution designs.

\section{Preliminaries}\label{sec:prelim}

\textbf{Notation. }We reserve bold-font capital letters (e.g., $\mathbf{S}$) for matrices and bold-font lowercase letters (e.g., $\mathbf{s}$) for vectors. We use $\mathbf{S}_i$ to denote the $i^{th}$ row of $\mathbf{S}$, $\mathbf{S}_{\setminus r}$ to denote the submatrix of $\mathbf{S}$ excluding its $r^{th}$ row and $\mathbf{S}_{ij}$ to denote the entry of $\mathbf{S}$ in the $i^{th}$ row and $j^{th}$ column. Furthermore, $\mathcal{G}=(\mathcal{V},\mathcal{E})$ stands for a directed graph with node set $\mathcal{V} = [n]$ of size $n$ and edge set $\mathcal{E}$. For simplicity, we assume that the graphs do not have self-loops. Standardly, we use $\mathbf{A}$ to denote the corresponding adjacency matrix. Without loss of generality, we also assume a labeling such that the first $m$ nodes are labeled ($[m]$) and the remaining $[n]\setminus [m]$ nodes are subjects of our prediction. The feature matrix is denoted by $\mathbf{X}\in \mathbb{R}^{n\times F},$ where $F$ stands for the feature vector dimension. For a $C$-class node classification problem, the training labels are summarized in $\mathbf{Y}\in \{0,1\}^{m\times C}$, where each row of $\mathbf{Y}$ is a one-hot vector. We reserve $\|\cdot\|$ for the $\ell_2$ norm and $\|\cdot\|_F$ for the Frobenius norm. We use $\mathbin\Vert$ for concatenation. Throughout the paper, we let $\mathcal{M}(v;\mathcal{D})$ stand for the (graph) learning algorithm that leverages $\mathcal{D}$ to generate label predictions for a node $v$ in $[n]\setminus[m]$. We assume the graph model outputs both its learned weights $\mathbf{W}$ and its label prediction $\widehat{\mathbf{Y}}_{v}$ (i.e., $(\widehat{\mathbf{Y}}_{v},\mathbf{W}) = \mathcal{M}(v;\mathcal{D})$).

\textbf{Node classification. }We focus on the transductive node classification problem. The training data is of the form $\mathcal{D}=(\mathbf{X},\mathbf{Y},\mathbf{A}),$ and this information is reused at the inference stage. Due to data reusing, it is important to specify which node $v$ is subject to prediction in the graph learning mechanism $\mathcal{M}(v;\mathcal{D})$. Requiring that all of $\mathcal{D}$ is private inevitably leads to noninformative predictions. It is therefore reasonable to only protect the information of $\mathcal{D}$ pertaining to the node that is not subject to prediction. We refer interested readers to Appendix ~\ref{apx:inductive} for discussion on how our analysis generalizes to inductive settings.

\subsection{Differential Privacy (DP)}
We start with a formal definition of $(\varepsilon, \delta)$-differential privacy (DP) \cite{dwork2006calibrating}.
\begin{definition}[Differential Privacy]\label{def:DP}
For $\varepsilon, \delta \geq 0$, a randomized algorithm $\mathcal{A}$ satisfies a $(\varepsilon, \delta)$-DP condition if for all adjacent datasets $\mathcal{D}, \mathcal{D}'$ that differ in one record, and all $\mathcal{S}$ in the range of $\mathcal{A}$,
$$ \Pr\left(\mathcal{A}(\mathcal{D}) \in \mathcal{S} \right)  \leq e^\varepsilon \Pr\left(\mathcal{A}(\mathcal{D}') \in \mathcal{S} \right) +\delta.$$
\end{definition} 

Note that the standard DP definition does not depend on the choice of nodes to be predicted. Yet, such a dependence is critical for the graph learning setting and constitutes the key difference between Definition~\ref{def:DP} and our definition of GDP in Section~\ref{sec:gdp}. We also make use of R\'enyi DP in order to facilitate privacy accounting. A given $(\alpha,\gamma)$-R\'enyi DP guarantee can be convert to a $(\varepsilon, \delta)$-DP guarantee via the conversion lemma~\cite{asoodeh2020better, canonne2020discrete, bun2016concentrated} (see also Appendix~\ref{apx:rdp_to_approx_dp}). 
\begin{definition}[R\'enyi Differential Privacy]
    Consider a randomized algorithm $\mathcal{A}$ that takes $\mathcal{D}$ as its input. The algorithm $\mathcal{A}$ is said to be $(\alpha,\gamma)$-R\'enyi DP if for every pair of adjacent datasets $\mathcal{D}$ and $\mathcal{D}^\prime$, one has $D_\alpha(\mathcal{A}(\mathcal{D})||\mathcal{A}(\mathcal{D}^\prime))\leq \gamma,$ where $D_\alpha(\cdot || \cdot)$ denotes the R\'enyi divergence of order $\alpha$: 
    \begin{align*}
        & D_\alpha(X||Y)=\frac{1}{\alpha-1}\log\left(\mathbb{E}_{x\sim Q}\left[\left(\frac{P(x)}{Q(x)}\right)^\alpha\right]\right), \text{ with }X\sim P\text{ and }Y\sim Q.
    \end{align*}
\end{definition}

\section{Graph Differential Privacy}\label{sec:gdp}

We provide next a formal definition of Graph Differential Privacy (GDP). Recall that in this case keeping the entire training data $\mathcal{D}$ private (as in standard DP definition, i.e., Definition~\ref{def:DP}) is problematic since it inevitably leads to noninformative model predictions. This follows since in such a setting the prediction $\widehat{\mathbf{Y}}_v$ fully depends on $\mathcal{D}$. This is unlike the case for standard classification where the test label predictions are formed via access to additional test data that is not subject to privacy constraints. Our key idea is to protect the privacy of all but the node $v$ being predicted, as users who query their own predictions should clearly have access to their own features and neighbors. 

We start by describing our notion of $k$-neighbor-level adjacent graph datasets. Throughout the remainder of the paper, we use $r$ to denote the replaced node.

\begin{definition}[$k$-neighbor-level adjacency]\label{def:adjacent_data_nn}
    Two graph datasets $\mathcal{D}$ and $\mathcal{D}^\prime$ are said to be $k$-neighbor-level adjacent, which is denoted by $\mathcal{D}\stackrel{N_k}{\sim}\mathcal{D}^\prime,$ if $\mathcal{D}^\prime$ can be obtained by replacing: 1) the $r^{th}$ node feature $\mathbf{X}_r$ with $\mathbf{X}_r^{\prime}\in \mathbb{R}^d$, 2) the $r^{th}$ node label $\mathbf{Y}_r$ with $\mathbf{Y}_r^{\prime}\in \{0,1\}^{C}$, for $r\in [m]$, and 3) replacing $k$ entries in $r^{th}$ row and column of $\mathbf{A}$ respectively excluding $\mathbf{A}_{rr}$. 
\end{definition}

Our Definition~\ref{def:adjacent_data_nn} unifies previous graph dataset adjacency notions such as edge- and node-level adjacency. For example, if we drop 1), 2) (i.e., keep the node attributes unchanged), and set $k=1$ for replacement in a row only, our Definition~\ref{def:adjacent_data_nn} recovers that of edge-level adjacency~\cite{wu2022linkteller}. If $k=n$, we recover the definition of node-level adjacency~\cite{daigavane2021node,sajadmanesh2023gap,chien2023efficient,pan2022unlearning}. Edge-level adjacency may be too weak of a privacy concept since it does not protect the privacy of node attributes. At the same time, node-level adjacency may also be too restrictive since it allows arbitrary replacement of an \emph{entire} node neighborhood. In practice, there are cases where node features and labels carry more sensitive information when compared to the graph topology. It is desirable to allow practitioners to decide the granularity of privacy for the graph structure $\mathbf{A}$ while maintaining the privacy of $\mathbf{X}$ and $\mathbf{Y}$. This motivates our new and extended $k$-neighbor-level adjacency definition. Note that the parameter $k$ serves as a new graph-specific privacy parameter similar to, but independent of, $\epsilon$ and $\delta$ in DP. Our $k$-neighbor definition can shed light on how a private graph learning design relies on the privacy of $\mathbf{A}$ controlled by the parameter $k$ and is discussed in more detail in Section~\ref{sec:methods}. We also provide a detailed discussion on the privacy meaning of $k$ in Appendix~\ref{apx:k_neighbor_explain}. In practice, the parameter $k$ also offers a unique trade-off between utility and graph structure privacy while preserving the same privacy guarantees of the node features $\mathbf{X}$ and labels $\mathbf{Y}$ (i.e., it is independent of the DP parameters $\epsilon,\delta$).  

We now provide our formal definition of Graph Differential Privacy (GDP). Unless otherwise specified, we use the superscript $^\prime$ to refer to entities with respect to the adjacent dataset $\mathcal{D}^\prime$. 

\begin{figure}[t]
    \centering
    \includegraphics[trim={0cm 10.7cm 12.5cm 3.1cm},clip,width=\linewidth]{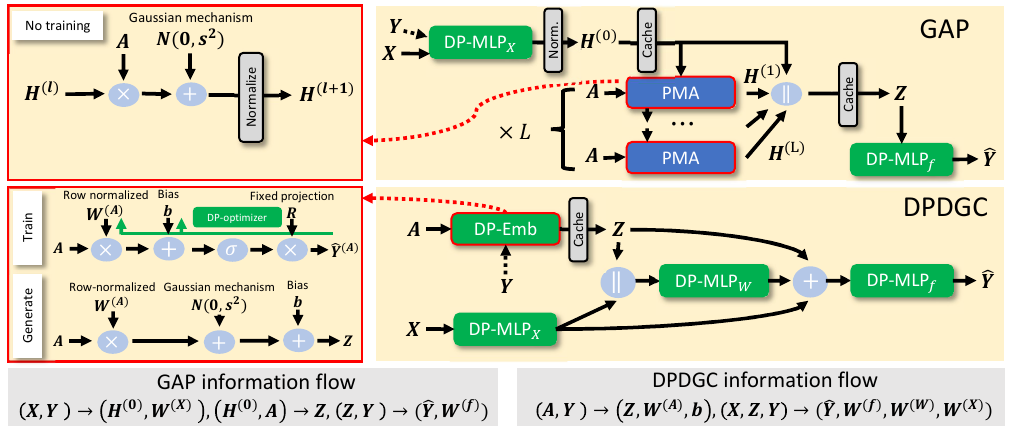}
    \vspace{-0.5cm}
    \caption{Illustration of the GAP and DPDGC (top) architectures and their corresponding information flow (bottom). Green modules indicate DP-MLPs trained with a DP-optimizer~\cite{abadi2016deep}. Blue modules are non-trainable modules. We use red frames to point to designs with DP guarantees (i.e., DP-Emb and PMA~\cite{sajadmanesh2023gap}). Trainable weights are denoted by $\mathbf{W}^{(A)}$ and $\mathbf{b}$ for the DP-Emb module. The black dashed arrow indicates modules that are pretrained separately and the outputs are cached.}
    \label{fig:GDP_baselines}
    \vspace{-0.5cm}
\end{figure}

\begin{definition}[Graph Differential Privacy]\label{def:GDP_TS}
    A graph model $\mathcal{M}$ is said to be edge (node, $k$-neighbor) $(\alpha,\gamma)$-GDP if for any $v\in [n]\setminus [m]$, for all $\mathcal{D}\stackrel{E}{\sim}\mathcal{D}^\prime$ ($\mathcal{D}\stackrel{N}{\sim}\mathcal{D}^\prime$, $\mathcal{D}\stackrel{N_k}{\sim}\mathcal{D}^\prime$), such that $r\neq v$, one has: $D_\alpha(\mathcal{M}(v;\mathcal{D})||\mathcal{M}(v;\mathcal{D}^\prime))\leq \gamma$, where $r$ is the index of the replaced node in the dataset pair $\mathcal{D},\mathcal{D}^\prime$.
\end{definition}

The key difference between our definition and that of standard DP is that instead of requiring the R\'enyi divergence bound to hold for all possible $(\mathcal{D},\mathcal{D}^\prime)$ pairs, we only require it to hold for pairs for which the replaced node $r$ does not require prediction (i.e., $r\neq v$). This is crucial since in this case, the R\'enyi divergence has to be bounded for \emph{different sets of adjacent graph dataset pairs that depend on $v$.} At a high level, Definition~\ref{def:GDP_TS} ensures that even if an adversary obtains the trained weights and the predictions of node $v$, it cannot infer information about the remaining nodes.
 
\section{Graph learning methods with GDP guarantees}\label{sec:methods}

We first perform the GDP analysis for GAP~\cite{sajadmanesh2023gap}, the state-of-the-art DP-GNN with standard graph convolution design in Section~\ref{sec:GDP_SIGN}. In the same section, issues with standard graph convolutions under $k$-neighbor GDP settings are discussed as well. We then proceed to introduce Differentially Private Decoupled Graph Convolution (DPDGC), a model with the decoupled graph convolution design that resolves all issues with GDP guarantees. DPDGC is motivated by the LINKX model~\cite{lim2021large} which offers excellent performance on heterophilic graphs in a non-private setting. These models are depicted in Figure~\ref{fig:GDP_baselines}, with the pseudocodes available in Appendix~\ref{apx:pseudo}. All missing formal statements and proofs are relegated to Appendix~\ref{apx:gdp-composition}-~\ref{apx:sign-Z-ndp}. We also defer the analysis of the simper edge GDP scenario to Appendix~\ref{apx:edge_GDP} and formal GDP guarantees to Appendix~\ref{apx:formal_GDP}.

\subsection{GDP guarantees of GAP and issues of standard graph convolution}\label{sec:GDP_SIGN}

\textbf{GAP training and inference. }We first describe the training process of GAP depicted in Figure~\ref{fig:GDP_baselines}. GAP first pretrains the node feature encoder DP-MLP$_X$ separately from the DP-optimizer. The row-normalized node embedding $\mathbf{H}^{(0)}$ is generated and cached after the pretraining of DP-MLP$_X$. Then the privatized $L$ multi-hop results $\{\mathbf{H}^{(l)}\}_{l=0}^{L}$ are generated by applying the PMA module~\cite{sajadmanesh2023gap}. The intermediate node embedding $\mathbf{Z}$ constructed by the concatenation of $\{\mathbf{H}^{(l)}\}_{l=0}^{L}$ is then cached. Finally, a node classifier DP-MLP$_f$ is trained with input $\mathbf{Z}$. At the inference stage, node predictions are obtained by the cached embedding $\mathbf{Z}$ with the trained DP-MLP$_f$.

\textbf{Node and $k$-neighbor GDP. }We assume that the out-degree (column-sum) of $\mathbf{A}$ is bounded from above by $D$. Note that prior works~\cite{daigavane2021node,sajadmanesh2023gap} also require this assumption. To meet this constraint in practice, preprocessing of the form of graph subsampling is needed, which causes additional data distortion. The first step of proving tight GDP guarantees for GAP is to ensure the cached embedding $\mathbf{Z}$ to be DP, \emph{except for the replaced node $r$}. We start with describing the PMA~\cite{sajadmanesh2023gap} module:
\begin{align}\label{eq:dp-sign-mod}
    \text{Input: }\mathbf{H}^{(l)}\in \mathbb{R}^{n\times h};\quad \text{Output: }\mathbf{H}^{(l+1)}=\text{row-normalization}(\mathbf{A}\mathbf{H}^{(l)}+\mathbf{N}),
\end{align}
where $\mathbf{N}\in\mathbb{R}^{n\times h}$ is a Gaussian noise matrix whose entries are i.i.d. zero mean Gaussian random variables with standard deviation $s$.

\begin{theorem}\label{thm:sign-Z-ndp}
    For any $\alpha>1$ and $\mathcal{D}\stackrel{N}{\sim}\mathcal{D}^\prime$ (or $\mathcal{D}\stackrel{N_k}{\sim}\mathcal{D}^\prime$), assume the trained parameter $\mathbf{W}^{(X)}$ of DP-MLP$_X$ in GAP satisfies $D_{\alpha}(\mathbf{W}^{(X)}||{\mathbf{W}^{(X)}}^\prime)\leq \gamma_1$ and that both $\mathbf{A}$, $\mathbf{A}^\prime$ have out-degree bounded by $D$. Let the replaced node index be $r$ and let $\mathbf{Z}_{\setminus r}$ be the matrix $\mathbf{Z}$ with the $r^{th}$ row excluded. Then the embedding $\mathbf{Z}$ in GAP satisfies $D_{\alpha}(\mathbf{Z}_{\setminus r}||\mathbf{Z}^\prime_{\setminus r})\leq \gamma_1 + \frac{4DL\alpha }{2s^2}$.
\end{theorem}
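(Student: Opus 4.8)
The plan is to peel the cached embedding $\mathbf{Z}_{\setminus r}=(\mathbf{H}^{(0)}_{\setminus r},\mathbf{H}^{(1)}_{\setminus r},\dots,\mathbf{H}^{(L)}_{\setminus r})$ apart one hop at a time and invoke the adaptive composition theorem for R\'enyi DP. The first block $\mathbf{H}^{(0)}_{\setminus r}$ is the row-normalized output of DP-MLP$_X$ evaluated on $\mathbf{X}_{\setminus r}$; since $\mathbf{X}_{\setminus r}$ is identical in $\mathcal{D}$ and $\mathcal{D}^\prime$ (only $\mathbf{X}_r$ is replaced), this block is a pure post-processing of $\mathbf{W}^{(X)}$, so $D_\alpha(\mathbf{H}^{(0)}_{\setminus r}\|{\mathbf{H}^{(0)}_{\setminus r}}^\prime)\le D_\alpha(\mathbf{W}^{(X)}\|{\mathbf{W}^{(X)}}^\prime)\le \gamma_1$ by assumption and the data-processing inequality. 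It then remains to charge each of the $L$ PMA steps a cost of $\tfrac{2D\alpha}{s^2}$, after which composition yields the claimed $\gamma_1+L\cdot\tfrac{2D\alpha}{s^2}=\gamma_1+\tfrac{4DL\alpha}{2s^2}$.

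For a single PMA step I would condition on the previously revealed blocks $\mathbf{H}^{(<l)}_{\setminus r}$ being equal in the two runs and compute the $\ell_2$ sensitivity of the Gaussian mean $(\mathbf{A}\mathbf{H}^{(l-1)})_{\setminus r}$ (the per-row row-normalization is discarded as post-processing). For a row $i\neq r$, row $i$ of $\mathbf{A}$ differs from that of $\mathbf{A}^\prime$ only in column $r$, and $\mathbf{H}^{(l-1)}_j$ for $j\neq r$ is shared; hence the stable term $\sum_{j\neq r}\mathbf{A}_{ij}\mathbf{H}^{(l-1)}_j$ cancels and the mean shift collapses to $\mathbf{A}_{ir}u-\mathbf{A}^\prime_{ir}u^\prime$, where $u=\mathbf{H}^{(l-1)}_r$ and $u^\prime={\mathbf{H}^{(l-1)}_r}^\prime$ are the (differing) copies of the replaced node's hop-$(l-1)$ embedding, both of norm at most $1$ by row-normalization.

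Summing over $i\neq r$ and splitting column $r$ into $a$ positions with $\mathbf{A}_{ir}=\mathbf{A}^\prime_{ir}=1$, $b$ positions with only $\mathbf{A}_{ir}=1$, and $c$ positions with only $\mathbf{A}^\prime_{ir}=1$, the squared sensitivity is at most $4a+b+c$ (the shared positions cost $\|u-u^\prime\|^2\le 4$ each); the out-degree (column-sum) bound gives $a+b\le D$ and $a+c\le D$, which maximizes $4a+b+c$ at $4D$. Plugging $\Delta^2\le 4D$ into the Gaussian-mechanism R\'enyi bound $\tfrac{\alpha\Delta^2}{2s^2}$ gives exactly $\tfrac{2D\alpha}{s^2}$ per step. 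I would stress that the dominant contribution $4a$ comes from \emph{unchanged} incident edges on which $u\neq u^\prime$ merely because $\mathbf{X}_r$ was replaced; this is precisely why the bound is the same under $\stackrel{N}{\sim}$ and $\stackrel{N_k}{\sim}$ and does not shrink as $k\to 0$, which is the drawback of standard graph convolution this section targets.

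The main obstacle is that the copy $u$ of the replaced node's embedding is \emph{not} part of the revealed output $\mathbf{H}^{(l-1)}_{\setminus r}$, yet it feeds the next hop and genuinely differs across the two runs, so the revealed sequence is not Markov and the naive adaptive-composition bookkeeping does not directly apply. The fix is that the sensitivity bound $\Delta^2\le 4D$ holds \emph{uniformly} over all admissible $(u,u^\prime)$ of norm at most $1$. Concretely, I would fix any coupling of the conditional laws of $(u,u^\prime)$ given the shared revealed past, form the joint distribution of $(u,u^\prime,\mathbf{H}^{(l)}_{\setminus r})$ under each run, and project out $(u,u^\prime)$ via the data-processing inequality; since the conditional Gaussian divergence is at most $\tfrac{2D\alpha}{s^2}$ for every coupled pair, the per-step conditional R\'enyi divergence is at most $\tfrac{2D\alpha}{s^2}$ no matter how the hidden row evolves. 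With this uniform per-step guarantee, adaptive composition over the encoder block and the $L$ PMA steps gives $D_\alpha(\mathbf{Z}_{\setminus r}\|\mathbf{Z}^\prime_{\setminus r})\le \gamma_1+\tfrac{4DL\alpha}{2s^2}$, as claimed.
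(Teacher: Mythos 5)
Your proof is correct and follows essentially the same route as the paper's: the same three-way case split on how row $i$'s contribution depends on $\mathbf{A}_{ir}$ versus $\mathbf{A}'_{ir}$, the same worst case $|N(r)\cap N'(r)|=D$ giving squared sensitivity $4D$ per hop (hence $k$-independence), and composition over the encoder plus $L$ PMA steps. The coupling/uniform-sensitivity argument you give to handle the hidden, non-private row $\mathbf{H}^{(l-1)}_r$ is exactly the content of the paper's generalized adaptive composition theorem (Theorem~\ref{thm:gdp-composition}), which it invokes for the same purpose.
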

\textit{Sketch of the proof: }We start by showing that for any $\mathcal{D}\stackrel{N}{\sim}\mathcal{D}^\prime$, $D_{\alpha}(\mathbf{H}^{(1)}_{\setminus r}||{\mathbf{H}_{\setminus r}^{(1)}}^\prime)\leq \gamma_1 + \frac{4D\alpha}{2s^2}$, which is done by examining the sensitivity of $\left[\mathbf{A}\mathbf{H}^{(0)}\right]_{\setminus r}$. For simplicity, we abbreviate $\mathbf{H}^{(0)}$ to $\mathbf{H}$. Note that $\left\|\left[\mathbf{A}\mathbf{H}\right]_{\setminus r}-\left[\mathbf{A}^\prime\mathbf{H}^{\prime}\right]_{\setminus r}\right\|^2_F = \sum_{i\in [n]\setminus\{r\}}\left\|\left[\mathbf{A}\mathbf{H}\right]_i-\left[\mathbf{A}^\prime\mathbf{H}^{\prime}\right]_{i}\right\|^2.$ We find that there are three cases of $i$ that contribute to a nonzero norm in the summation. Let $N(r)$ and $N^\prime(r)$ denote the out-neighbor node sets of $r$ with respect to $\mathbf{A}$ and $\mathbf{A}^\prime,$ respectively (i.e., $N(r)=\{i:\mathbf{A}_{ir}=1\}$). The three cases are: (1) $i\in N(r)\setminus N^\prime(r)$, (2) $i\in N^\prime(r)\setminus N(r)$, and (3) $i\in N(r)\cap N^\prime(r)$. For case (1) and (2), $\|\left[\mathbf{A}\mathbf{H}\right]_i-\left[\mathbf{A}^\prime\mathbf{H}^\prime\right]_i\| = 1$ due to $\mathbf{H}$ and $\mathbf{H}^{\prime}$ being row-normalized. For case (3), we have $\|\left[\mathbf{A}\mathbf{H}\right]_i-\left[\mathbf{A}^\prime\mathbf{H}^\prime\right]_i\| = \|\mathbf{H}_r-\mathbf{H}_r^\prime\| \leq 2$. Since the out-degree is upper bounded by $D$, we know that $\max(|N(r)|,|N^\prime(r)|)\leq D$. The worst case arises for $|N(r)\cap N^\prime(r)| = D$ (i.e., shares common neighbors). Thus the sensitivity equals $2\sqrt{D}$ (i.e., $D$ of case (3)), which leads to the term $\frac{4D\alpha}{2s^2}$ in the divergence bound. By applying DP composition theorem~\cite{mironov2017Renyi} and the assumption on $\mathbf{W}^{(X)}$, we can prove that $D_{\alpha}(\mathbf{H}^{(1)}_{\setminus r}||{\mathbf{H}_{\setminus r}^{(1)}}^\prime)\leq \gamma_1 + \frac{4D\alpha}{2s^2}$. For the $L$-hop result $\mathbf{Z}_{\setminus r}$, one can apply DP composition theorem as part of an induction. For the case of $k$-neighbor-level adjacency, the worst case scenario still arises for $|N(r)\cap N^\prime(r)| = D$ for any $k\geq 0$. Thus, the same result holds for the case $k$-neighbor-level adjacency for all $k\geq 0$. Note that the above analysis/result is tight, with the worst case arising when $\mathbf{H}_r = -\mathbf{H}_r^\prime$ and $\max(|N(r)|,|N^\prime(r)|) = D$.

Regarding the assumption on $\mathbf{W}^{(X)}$, it can be met by invoking a standard DP-optimizer result~\cite{abadi2016deep}, where $\gamma_1$ depends on the noise multiplier of the DP-optimizer. By using further the DP composition theorem and standard DP-optimizer results, we can conclude that the weights of the DP-MLP$_f$ module are also DP. At the inference stage, since our GDP definition requires that $r\neq v$ (i.e., nodes to be predicted are not subject to replacement in adjacent graph dataset pairs), one only needs to  ensure that $\mathbf{Z}_{\setminus r}$ -- instead of the entire $\mathbf{Z}$ -- to be DP. This establishes the node and $k$-neighbor GDP guarantees for GAP (Corollary~\ref{cor:sign-ndp}).

Surprisingly, the resulting divergence bound does not depend on the parameter $k$ for the $k$-neighbor GDP setting. It implies that the privacy noise scale $s$ required by GAP is the same even when $k$ is much smaller than $D$, or even equal to zero. At first, this result seems counter-intuitive but can be understood as follows. By replacing $\mathbf{H}_r$ with an all-zeros row, it becomes impossible to get information about the $r^{th}$ row and column of $\mathbf{A}$ through $[\mathbf{A}\mathbf{H}]_{\setminus r}$. Therefore, DP-GNNs based on standard graph convolution designs such as GAP cannot exploit the intrinsic privacy-utility trade-offs induced by $k$-neighbor-level adjacency constraints. Furthermore, the resulting divergence bound grows linearly to the value of the  maximum degree $D$. Consequently, one has to preprocess the graph so that the maximum degree is upper-bounded by a pre-defined value $D$, which inevitably causes graph information distortion.

Based on our analysis, we observe that the issue arises from the graph convolution operation $\mathbf{A}\mathbf{H}$, where both the graph structure (topology) $\mathbf{A}$ and transformed node feature $\mathbf{H}$ change simultaneously to $\mathbf{A}^\prime$ and $\mathbf{H}^\prime$ on $\mathcal{D}^\prime$. As a result, rows corresponding to case (3) contribute $2$ to the norm bound, which indicates greater privacy leakage (sensitivity). This motivates us to decouple the graph convolution so that there are no products of $\mathbf{A}$ and $\mathbf{H}$ to work with, which motivates introducing our DPDGC model discussed in the next section.

\begin{remark}
    While the proof of Theorem~\ref{thm:sign-Z-ndp} is mainly inspired by the proof in~\cite{sajadmanesh2023gap}, it has several technical differences. First, the analysis in~\cite{sajadmanesh2023gap} asserts that the \textbf{entire} $\mathbf{Z}$ is DP (see Lemma 3 in~\cite{sajadmanesh2023gap}). In contrast, we only ensure that $\mathbf{Z}_{\setminus r}$ is DP. This is crucial as $\left\|\left[\mathbf{A}\mathbf{H}\right]_r-\left[\mathbf{A}^\prime\mathbf{H}^{\prime}\right]_{r}\right\| = 2D$ in the worst-case sensitivity analysis, which leads to $O(D^2)$ in the divergence bound. Also, while we leverage standard DP composition theorems~\cite{mironov2017Renyi} in the sketch of proof similar to~\cite{sajadmanesh2023gap} for the sake of simplicity, we argue in Appendix~\ref{apx:gdp-composition} that it is more appropriate to use our novel generalized adaptive composition theorem (Theorem~\ref{thm:gdp-composition}) for a rigorous GDP analysis. 
\end{remark}

\subsection{GDP guarantees of DPDGC and benefits of decoupled graph convolution}\label{sec:GDP_LINKX}

\textbf{DPDGC training and inference. }We first describe the training process of DPDGC depicted in Figure~\ref{fig:GDP_baselines}. We first pretrain the DP-Emb module separately and freeze its weights $(\mathbf{W}^{(A)},\mathbf{b})$. Then we generate the intermediate embedding $\mathbf{Z}$ and cache it. Finally, we use $(\mathbf{X},\mathbf{Z},\mathbf{Y})$ to train the remaining modules in an end-to-end fashion. At the inference stage, the node prediction is obtained by $(\mathbf{X},\mathbf{Z})$ and the trained weights. See the pseudocode in Appendix~\ref{apx:pseudo} for further details.

\textbf{Node $k$-neighbor GDP guarantees. }For DPDGC, we only need the bounded out-degree assumption for node GDP but not $k$-neighbor GDP. The key idea of the GDP guarantee proof is to ensure the cached embedding $\mathbf{Z}$ to be DP, \emph{except for the replaced node $r$}. We start by introducing the DP-Emb module in DPDGC, which guarantees both the model weight $(\mathbf{W}^{(A)},\mathbf{b})$ and $\mathbf{Z}_{\setminus r}$ to be DP:
\begin{align}\label{eq:mlpA}
   & \text{Training: } \widehat{\mathbf{Y}}^{(A)} = \sigma(\mathbf{A}\mathbf{W}^{(A)}+\mathbf{b})\mathbf{R}\quad \text{Generate $\mathbf{Z}$: } \mathbf{Z} = \mathbf{A}\mathbf{W}^{(A)}+\mathbf{b}+\mathbf{N},
\end{align}
where $\mathbf{W}^{(A)}\in \mathbb{R}^{n\times h}$ and $\mathbf{b}\in\mathbb{R}^h$ are learnable weights. Here, $\mathbf{R}\in \mathbb{R}^{h\times C}$ is a random but fixed projection head of hidden dimension $h$, $\sigma(\cdot)$ is some nonlinear activation function, and $\mathbf{N}\in\mathbb{R}^{n\times h}$ is a Gaussian noise matrix whose entries are i.i.d. zero mean Gaussian random variables with standard deviation $s$. Importantly, we constraint $\mathbf{W}^{(A)}$ to be row-normalized, which is critical in proving $\mathbf{Z}_{\setminus r}$ DP. In what follows, we focus on privatizing $\mathbf{Z}_{\setminus r}$, and the GDP guarantee for the overall model then follows by applying DP composition theorem~\cite{mironov2017Renyi}. 

\begin{theorem}\label{thm:Z-ndp}
    For any $\alpha>1$ and $\mathcal{D}\stackrel{N}{\sim}\mathcal{D}^\prime$, assume that $D_{\alpha}((\mathbf{W}^{(A)},\mathbf{b})||(\mathbf{W}^{(A)},\mathbf{b})^\prime)\leq \gamma_1$ and both $\mathbf{A}$, $\mathbf{A}^\prime$ have out-degree bounded by $D$. Let the replaced node index be $r$ and let $\mathbf{Z}_{\setminus r}$ be the matrix $\mathbf{Z}$ with the $r^{th}$ row excluded. Then the embedding $\mathbf{Z}$ in DPDGC satisfies $D_{\alpha}(\mathbf{Z}_{\setminus r}||\mathbf{Z}^\prime_{\setminus r})\leq \gamma_1 + \frac{2D \alpha }{2s^2}$.
\end{theorem}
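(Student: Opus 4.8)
The plan is to mirror the structure of the proof of Theorem~\ref{thm:sign-Z-ndp}, but to exploit the fact that in the decoupled module~\eqref{eq:mlpA} the matrix multiplied by $\mathbf{A}$ is the \emph{learnable, data-independent-shape} weight $\mathbf{W}^{(A)}$ rather than a feature-dependent embedding $\mathbf{H}(\mathbf{X})$. First I would decompose the release of $\mathbf{Z}_{\setminus r}$ into two stages and bound each via R\'enyi DP composition~\cite{mironov2017Renyi}. Stage one releases the trained pair $(\mathbf{W}^{(A)},\mathbf{b})$, which by assumption contributes $\gamma_1$. Stage two treats $(\mathbf{W}^{(A)},\mathbf{b})$ as fixed auxiliary input and releases $\mathbf{Z}_{\setminus r} = \left[\mathbf{A}\mathbf{W}^{(A)}\right]_{\setminus r} + \mathbf{b} + \mathbf{N}_{\setminus r}$, which is exactly a Gaussian mechanism applied to the data-dependent quantity $\left[\mathbf{A}\mathbf{W}^{(A)}\right]_{\setminus r}$ (the additive $\mathbf{b}$ cancels in the difference). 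The whole problem then reduces to computing the Frobenius sensitivity $\Delta = \left\|\left[\mathbf{A}\mathbf{W}^{(A)}\right]_{\setminus r} - \left[\mathbf{A}^\prime\mathbf{W}^{(A)}\right]_{\setminus r}\right\|_F$, with $\mathbf{W}^{(A)}$ held fixed and row-normalized and $\mathbf{A},\mathbf{A}^\prime$ node-level adjacent with out-degree bounded by $D$.

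The sensitivity computation is where the benefit of decoupling appears. For any row $i\neq r$, node-level adjacency only alters the $r^{th}$ column among the columns present in row $i$, so $\left[\mathbf{A}\mathbf{W}^{(A)}\right]_i - \left[\mathbf{A}^\prime\mathbf{W}^{(A)}\right]_i = (\mathbf{A}_{ir}-\mathbf{A}^\prime_{ir})\mathbf{W}^{(A)}_r$, whose norm is at most $\|\mathbf{W}^{(A)}_r\| = 1$ precisely because $\mathbf{W}^{(A)}$ is row-normalized. Crucially, this term is nonzero \emph{only} for the rows $i$ where the column entry actually changes, i.e.\ $i\in \big(N(r)\setminus N^\prime(r)\big)\cup\big(N^\prime(r)\setminus N(r)\big)$ with $N(r)=\{i:\mathbf{A}_{ir}=1\}$; the common out-neighbors $N(r)\cap N^\prime(r)$ contribute nothing since $\mathbf{A}_{ir}=\mathbf{A}^\prime_{ir}$ and $\mathbf{W}^{(A)}$ is fixed. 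This is the exact point of departure from GAP, where the corresponding common-neighbor rows each contribute up to $2$ because $\mathbf{H}_r$ changes with $\mathbf{X}_r$, producing the factor $4D$. Since $\max(|N(r)|,|N^\prime(r)|)\leq D$, the symmetric difference has size at most $2D$, giving $\Delta^2 \leq 2D$ and hence $\Delta\leq \sqrt{2D}$.

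Finishing, the Gaussian-mechanism R\'enyi bound yields $D_{\alpha}(\mathbf{Z}_{\setminus r}\|\mathbf{Z}^\prime_{\setminus r})\leq \frac{\Delta^2\alpha}{2s^2} = \frac{2D\alpha}{2s^2}$ for the second stage, and composing with the $\gamma_1$ from the weights gives the claimed $\gamma_1 + \frac{2D\alpha}{2s^2}$. I would also make explicit why row $r$ must be excluded: the $r^{th}$ row of $\mathbf{A}$ encodes in-edges (a row sum is an in-degree), which is \emph{not} controlled by the out-degree bound $D$, so $\left[\mathbf{A}\mathbf{W}^{(A)}\right]_r$ has unbounded sensitivity; excluding it is consistent with the GDP requirement $r\neq v$. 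The step I expect to be the main obstacle is justifying the composition rigorously, namely that one may hold $(\mathbf{W}^{(A)},\mathbf{b})$ fixed as the output of the first mechanism while measuring the sensitivity of the second mechanism purely through $\mathbf{A}$; stated with standard adaptive RDP composition this is routine, but a fully rigorous GDP treatment should invoke the generalized adaptive composition theorem (Theorem~\ref{thm:gdp-composition}) as noted in the remark following Theorem~\ref{thm:sign-Z-ndp}, and one must verify that the row-normalization constraint on $\mathbf{W}^{(A)}$ indeed caps each changed row's contribution at $1$.
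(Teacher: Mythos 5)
Your proposal is correct and follows essentially the same route as the paper's own proof: you compute the Frobenius sensitivity of $\left[\mathbf{A}\mathbf{W}^{(A)}\right]_{\setminus r}$ row by row, observe that only rows in the symmetric difference $\big(N(r)\setminus N^\prime(r)\big)\cup\big(N^\prime(r)\setminus N(r)\big)$ contribute (each exactly $\|\mathbf{e}_r^T\mathbf{W}^{(A)}\|=1$ by row-normalization), bound that set by $2D$ via the out-degree constraint to get $\Delta\leq\sqrt{2D}$, and compose the Gaussian-mechanism RDP bound with the $\gamma_1$ for $(\mathbf{W}^{(A)},\mathbf{b})$. Your added remarks on why row $r$ must be excluded and on using the generalized adaptive composition theorem match the paper's own discussion in its remark and appendix.
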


\begin{theorem}\label{thm:Z-nk}
    For any $\mathcal{D}\stackrel{N_k}{\sim}\mathcal{D}^\prime$, assume that $D_{\alpha}((\mathbf{W}^{(A)},\mathbf{b})||(\mathbf{W}^{(A)},\mathbf{b})^\prime)\leq \gamma_1$ and both $\mathbf{A}$. Let the replaced node index be $r$ and let $\mathbf{Z}_{\setminus r}$ be the matrix $\mathbf{Z}$ with the $r^{th}$ row excluded. For any $\alpha>1$, the embedding $\mathbf{Z}$ in DPDGC satisfies $D_{\alpha}(\mathbf{Z}_{\setminus r}||\mathbf{Z}^\prime_{\setminus r})\leq \gamma_1 + \frac{k \alpha}{2s^2}$.
\end{theorem}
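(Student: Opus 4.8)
The plan is to follow the template of the proof sketch for Theorem~\ref{thm:sign-Z-ndp}, but to exploit the decoupling in the DP-Emb module~\eqref{eq:mlpA}: because $\mathbf{W}^{(A)}$ is a learnable per-node weight rather than a transformed feature matrix, only the graph structure $\mathbf{A}$ — not a quantity coupled to the replaced node's attributes — will enter the Gaussian sensitivity analysis. First I would view the release of $\mathbf{Z}_{\setminus r}$ as an adaptive composition of two mechanisms: the release of the pretrained weights $(\mathbf{W}^{(A)},\mathbf{b})$, which is $(\alpha,\gamma_1)$-R\'enyi DP by assumption, followed by the Gaussian mechanism outputting $\mathbf{Z}_{\setminus r}=[\mathbf{A}\mathbf{W}^{(A)}+\mathbf{b}]_{\setminus r}+\mathbf{N}_{\setminus r}$ conditioned on those weights. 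By the adaptive R\'enyi-DP composition theorem~\cite{mironov2017Renyi} (together with post-processing for the marginal on $\mathbf{Z}_{\setminus r}$), it then suffices to bound the order-$\alpha$ divergence of the second mechanism for every fixed value of $(\mathbf{W}^{(A)},\mathbf{b})$ and add $\gamma_1$.

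With $(\mathbf{W}^{(A)},\mathbf{b})$ held fixed (identical in both worlds), the second mechanism is a Gaussian mechanism on the map $\mathcal{D}\mapsto[\mathbf{A}\mathbf{W}^{(A)}+\mathbf{b}]_{\setminus r}$, whose divergence is $\frac{\alpha\Delta^2}{2s^2}$ with $\Delta=\sup_{\mathcal{D}\stackrel{N_k}{\sim}\mathcal{D}^\prime}\|[\mathbf{A}\mathbf{W}^{(A)}]_{\setminus r}-[\mathbf{A}^\prime\mathbf{W}^{(A)}]_{\setminus r}\|_F$ (the additive $\mathbf{b}$ cancels in the difference). The crux — and the step where decoupling pays off — is this sensitivity. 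Writing the squared Frobenius norm as a sum over rows $i\neq r$, I would observe that the $i$-th row of $(\mathbf{A}-\mathbf{A}^\prime)\mathbf{W}^{(A)}$ depends only on the $i$-th row of $\mathbf{A}$; hence for $i\neq r$ it can differ from its primed counterpart only in the column-$r$ entry, giving $[(\mathbf{A}-\mathbf{A}^\prime)\mathbf{W}^{(A)}]_i=(\mathbf{A}_{ir}-\mathbf{A}^\prime_{ir})\mathbf{W}^{(A)}_r$. In particular the (up to $k$) replaced entries in the $r$-th \emph{row} of $\mathbf{A}$ contribute nothing, as they affect only the excluded $r$-th row of $\mathbf{Z}$.

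Since $\mathbf{W}^{(A)}$ is row-normalized, $\|\mathbf{W}^{(A)}_r\|\le 1$, so each nonzero row contributes at most $|\mathbf{A}_{ir}-\mathbf{A}^\prime_{ir}|^2\|\mathbf{W}^{(A)}_r\|^2\le 1$; and under $k$-neighbor-level adjacency at most $k$ entries of the $r$-th column are replaced, so at most $k$ rows $i\neq r$ are nonzero. This yields $\Delta^2\le k$, and combining with the composition step gives $D_{\alpha}(\mathbf{Z}_{\setminus r}\|\mathbf{Z}^\prime_{\setminus r})\le\gamma_1+\frac{k\alpha}{2s^2}$. I expect the main obstacle to be justifying rigorously that the weights may be frozen in the Gaussian sensitivity analysis, so that only $\mathbf{A}$ varies; this is exactly what separates DPDGC from GAP, where in $\mathbf{A}\mathbf{H}$ the changed feature $\mathbf{H}_r$ of the replaced node propagates to all of its up-to-$D$ neighbors (producing the $2D$ in Theorem~\ref{thm:Z-ndp}), whereas here the fixed weight $\mathbf{W}^{(A)}_r$ enters only through the $\le k$ replaced column-$r$ entries. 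A secondary point worth confirming is that, unlike the node case, no out-degree bound is required, since the count of contributing rows is governed directly by $k$ rather than by the neighborhood size.
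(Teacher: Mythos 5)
Your proposal is correct and follows essentially the same route as the paper's proof: you bound the per-row sensitivity of $[\mathbf{A}\mathbf{W}^{(A)}]_{\setminus r}$ using the row-normalization of $\mathbf{W}^{(A)}$, observe that only the at most $k$ replaced entries of column $r$ of $\mathbf{A}$ contribute (your identity $[(\mathbf{A}-\mathbf{A}^\prime)\mathbf{W}^{(A)}]_i=(\mathbf{A}_{ir}-\mathbf{A}^\prime_{ir})\mathbf{W}^{(A)}_r$ is just a compact restatement of the paper's four-case neighborhood analysis), obtain $\Delta^2\le k$, and compose with the $(\alpha,\gamma_1)$-RDP release of the weights. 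Your closing observations — that the excluded row $r$ absorbs the replaced row-$r$ entries of $\mathbf{A}$ and that no out-degree bound is needed — match the paper exactly.
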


\textit{Sketch of the proof: } For any $\mathcal{D}\stackrel{N}{\sim}\mathcal{D}^\prime$ ($\mathcal{D}\stackrel{N_k}{\sim}\mathcal{D}^\prime$), we examine $\|\left[\mathbf{A}\mathbf{W}^{(A)}\right]_{i} - \left[\mathbf{A}^\prime\mathbf{W}^{(A)}\right]_{i}\|,\; i\in [n]\setminus \{r\}$. The worst case row norm equals $1$ for cases (1) and (2) (defined in the proof of Theorem~\ref{thm:sign-Z-ndp}) since $\mathbf{W}^{(A)}$ is row-normalized. The main difference to the proof in Section~\ref{sec:GDP_SIGN} is the case (3), where $\|\left[\mathbf{A}\mathbf{W}^{(A)}\right]_{i} - \left[\mathbf{A}^\prime\mathbf{W}^{(A)}\right]_{i}\|=0$, since $\mathbf{A}_i=\mathbf{A}^\prime_i$ for $i\in N(r)\cap N^\prime(r)$. As a result, the worst case arises for $|N(r)\cap N^\prime(r)|=\emptyset$ and there are at most $2D$ ($k$) rows of cases (1) and (2). As a result, the sensitivity of $\mathbf{Z}_{\setminus r}$ is $\sqrt{2D}$ ($\sqrt{k}$) which leads to the term $\frac{2D\alpha}{2s^2}$ ($\frac{k\alpha}{2s^2}$) in the divergence bound. Once more, our sensitivity bound can be shown to be tight, with the worst case as described above.

The key improvement, when compared to GAP, arises from using $\mathbf{A}\mathbf{W}^{(A)}$ instead of $\mathbf{A}\mathbf{H}$. In the case of $\mathbf{W}^{(A)}$ being DP, the sensitivity analysis only requires one to consider the difference between $\mathbf{A}\mathbf{W}^{(A)}$ and $\mathbf{A}^\prime\mathbf{W}^{(A)}$ of two adjacent graph datasets, $\mathcal{D}$ and $\mathcal{D}^\prime$. In contrast, for GAP, the sensitivity analysis of $\mathbf{A}\mathbf{H}$ requires taking \emph{both} $\mathbf{A}$ and $\mathbf{H}$ into account since both can vary when the underlying dataset changes. Note that even if DP-MLP$_X$ is trained via a DP-optimizer, $\mathbf{H}$ is not DP since it is dependent on the weights of DP-MLP$_X$ and the node feature $\mathbf{X}$.

The assumption on $(\mathbf{W}^{(A)},\mathbf{b})$ can be met by applying a standard DP-optimizer with a group size~\cite{dwork2014algorithmic} of $D+1$ ($k+1$), where $\gamma_1$ depends on the noise multiplier of the DP-optimizer. This follows from the fact that the out-degree is bounded by $D$ for node GDP or from the definition of $k$-neighbor-level adjacency, since replacing one node can affect at most $D$ ($k$) neighbors. By further applying the DP composition theorem and the standard DP-optimizer result, we arrive at the node ($k$-neighbor) GDP guarantees for DPDGC (see Corollary~\ref{cor:linkx-ndp}, \ref{cor:linkx-nk}).

Compared to GAP, DPDGC requires significantly lower DP noise whenever $k<D$. It can thus ensure a graph topology privacy-utility trade-off that is not possible with GAP. Furthermore, the divergence bound for DPDGC is independent of the maximum degree within the $k$-neighbor-level adjacency setting. Hence, DPDGC does not require preprocessing the graph, which alleviates the issue of added graph distortion.

\section{Experiments}\label{sec:exp}

\begin{table}[t]
\centering
\caption{Dataset statistics. Datasets are sorted by homophily.}
\vspace{0.1 cm}
\scriptsize
\label{tab:stats}
\begin{tabular}{@{}cccccccc@{}}
\toprule
             & Squirrel & Chameleon & Facebook  & Pubmed & Computers & Cora   & Photo   \\ \midrule
nodes        & 5,201    & 2,277     & 26,406    & 19,717 & 13,471    & 2,708  & 7,535   \\
edges        & 216,933  & 36,051    & 2,117,924 & 88,648 & 491,722   & 10,556 & 238,162 \\
features     & 2,089    & 2,325     & 501       & 500    & 767       & 1,433  & 745     \\
classes      & 5        & 5         & 6         & 3      & 10        & 7      & 8       \\
edge density & 0.0160   & 0.0139    & 0.0061    & 0.0005 & 0.0054    & 0.0029 & 0.0084  \\
homophily    & 0.0254   & 0.0620    & 0.3687    & 0.6641 & 0.7002    & 0.7657 & 0.7722  \\ \bottomrule
\end{tabular}
\vspace{-0.3cm}
\end{table}

We test graph learning models that can achieve GDP guarantees under various settings, including nonprivate, edge-level, $k$-neighbor-level ($N_k$) for $k\in\{1,5,25\}$, and node-level. Note that all node GDP methods require the bounded out-degree constraint. We follow~\cite{daigavane2021node,sajadmanesh2023gap} to subsample the graph so as to satisfy this constraint. 

\textbf{Methods. }In addition to DPDGC and GAP introduced in Section~\ref{sec:methods}, we also test (DP-)MLP and several DP-GNN baselines that can achieve GDP guarantees, including RandEdge+SAGE~\cite{wu2022linkteller} and DP-SAGE~\cite{daigavane2021node} for edge and node GDP, respectively. The RandEdge+SAGE approach privatizes the adjacency matrix directly via randomized response~\cite{wu2022linkteller} and feeds it to GraphSAGE~\cite{hamilton2017inductive}, a standard GNN backbone. The DP-SAGE approach trains the GraphSAGE model with the strategy proposed in~\cite{daigavane2021node} so that the weights are DP. To further ensure that the predictions are DP as well, we follow the strategy of~\cite{sajadmanesh2023gap} to add perturbations directly at the output layer during inference. For each GDP setting, we specify $\epsilon$ to indicate that all methods satisfy GDP with privacy budget $(\epsilon,\delta)$ according to Lemma~\ref{lemma:rdp_to_approx_dp}. Note that $\delta$ is set to be smaller than either $\frac{1}{\# \text{edges}}$ or $\frac{1}{\# \text{nodes}},$ depending on the GDP setting. Addition experimental details are deferred to Appendix~\ref{apx:exp_detail}.

\textbf{Datasets. }We test $7$ benchmark datasets available from either Pytorch Geometric library~\cite{fey2019fast} or prior works. These datasets include the social network \textbf{Facebook}~\cite{traud2012social}, citation networks \textbf{Cora} and \textbf{Pubmed}~\cite{sen2008collective,yang2016revisiting}, Amazon co-purchase networks \textbf{Photo} and \textbf{Computers}~\cite{shchur2018pitfalls}, and Wikipedia networks \textbf{Squirrel} and \textbf{Chameleon}~\cite{rozemberczki2021multi}. The edge density equals  $2|\mathcal{E}|/(n\times(n-1))$, while the formal definition of the homophily measure is relegated to Appendix~\ref{apx:exp_detail}.

\textbf{Results. }We examine the performance of each model under different GDP settings, including the nonprivate case. The results are summarized in Table~\ref{tab:main-exp}. For the edge GDP setting, we observe that DPDGC achieves the best performance on heterophilic datasets, while on par with GAP on most homophilic datasets. Surprisingly, for the node GDP setting, we find that DP-MLP achieves the best performance on four datasets. This indicates that for some datasets, the benefits of graph information cannot compensate for the utility loss induced by privacy noise that protects the graph information (for all tested private graph learning methods). As a result, achieving node GDP effectively is highly challenging and highlights the importance of investigating the $k$-neighbor GDP setting. For the $k$-neighbor GDP setting, we can see that indeed DPDGC has a much better utility for $k=1$ and the performance gradually decreases as $k$ grows. The required privacy noise scale is the same for GAP and DP-MLP for any $k$: this phenomenon is discussed in more depth in Section~\ref{sec:methods}. DPDGC starts to outperform DP-MLP on Photo and Computers for sufficiently small $k$, which demonstrates the unique utility-topology privacy trade-offs of our method that cannot be achieved via models with standard graph convolution design such as GAP. However, DPDGC still underperforms compared to DP-MLP on Cora and Pubmed. 

\textbf{Experiments on synthetic contextual stochastic block models (cSBMs). }In order to test our conjecture that DP-MLP can only outperform DPDGC when the graph information is ``too weak,'' we conduct an experiment involving cSBMs~\cite{deshpande2018contextual}, following a setup similar to that reported in~\cite{chien2021adaptive}. The parameter $\phi\in [-1,1]$ indicates the ``strength of information'' in the graph topology ($|\phi| \rightarrow 1$) and the node features ($|\phi|\rightarrow 0$). The sign of $\phi$ indicates whether the graph topology is homophilic ($+$) or heterophilic $(-)$. See Appendix~\ref{apx:exp_detail} for full details about the experiment setting. The result is depicted in Figure~\ref{fig:GDP_exp} (d). When $|\phi|$ is small, DP-MLP outperforms DPDGC. In contrast, when $|\phi|$ is sufficiently large (i.e., the information in the graph topology is strong enough), DPDGC outperforms DP-MLP. This experiment suggests that one should not leverage DP-GNNs when the graph topology information is too weak, as the cost of privatizing the graph topology information is too high.

\textbf{Utility-privacy trade-off.} Lastly, we examine the utility-privacy trade-off for different GDP settings in greater detail. In Figure~\ref{fig:GDP_exp} (left) we show that DPDGC starts to outperform GAP for larger privacy budgets -- $\epsilon\geq 4$ -- in the edge GDP setting. This figure also shows the advantage of GAP when the privacy budget $\epsilon$ is small for edge GDP. Whether a more advanced decoupled graph convolution design can outperform GAP in this regime is left for future studies. Figure~\ref{fig:GDP_exp} (middle) reports the trade-offs between utility and the privacy budget $\epsilon$ of all methods under different GDP settings (node, $N_1$, $N_5$, $N_{25}$) for the Facebook dataset. We report the trade-off between utility and $k$ under $k$-neighbor GDP with $\epsilon=16$ for the Computer dataset in Figure~\ref{fig:GDP_exp} (right). Adopting the $k$-neighbor GDP definition allows us to control the privacy of the graph structure while maintaining the same privacy guarantee on node features and labels. This is in contrast with changing the parameters $(\epsilon,\delta)$ directly as they control the privacy of the \emph{entire dataset}. DPDGC along with the $k$-neighbor GDP definitions hence allows a more fine-grained privacy control of the graph topology as needed in different practical applications.

\begin{table}[t]
\centering
\caption{Test accuracy ($\%$) with $95\%$ confidence interval. A bold font indicates the best performance one can achieve under the same GDP guarantee. Underlined entries indicate a result within the confidence interval when compared to the best possible. Note that for the $k$-neighbor ($N_k$) GDP setting, the results of GAP and DP-MLP are identical to those of node GDP.}
\vspace{0.1 cm}
\label{tab:main-exp}
\scriptsize
\setlength{\tabcolsep}{4.0pt}
\begin{tabular}{@{}ccccccccc@{}}
\toprule
\multicolumn{2}{c}{} &
  Squirrel &
  Chameleon &
  Facebook &
  Pubmed &
  Computers &
  Cora &
  Photo \\ \midrule
\multirow{4}{*}{\rotatebox[origin=c]{90}{\parbox{1.0cm}{\centering none\\private}}} &
  DPDGC &
  \textbf{79.92 $\pm$ 0.26} &
  \textbf{79.24 $\pm$ 0.54} &
  \textbf{86.06 $\pm$ 0.24} &
  88.34 $\pm$ 0.46 &
  \textbf{92.27 $\pm$ 0.15} &
  82.44 $\pm$ 0.83 &
  {\ul 94.98 $\pm$ 0.29} \\
 &
  GAP &
  36.86 $\pm$ 1.35 &
  50.35 $\pm$ 1.37 &
  79.52 $\pm$ 0.24 &
  \textbf{89.75 $\pm$ 0.12} &
  91.05 $\pm$ 0.16 &
  \textbf{86.53 $\pm$ 0.46} &
  \textbf{95.13 $\pm$ 0.16} \\
 &
  SAGE &
  35.47 $\pm$ 0.58 &
  41.61 $\pm$ 0.86 &
  84.62 $\pm$ 0.13 &
  88.17 $\pm$ 0.98 &
  91.76 $\pm$ 0.23 &
  84.19 $\pm$ 0.76 &
  94.05 $\pm$ 0.38 \\
 &
  MLP &
  34.14 $\pm$ 0.77 &
  46.78 $\pm$ 1.39 &
  51.16 $\pm$ 0.16 &
  87.25 $\pm$ 0.19 &
  85.27 $\pm$ 0.28 &
  76.48 $\pm$ 0.91 &
  91.35 $\pm$ 0.22 \\ \midrule
\multirow{4}{*}{\rotatebox[origin=c]{90}{\parbox{1.0cm}{\centering edge\\$\epsilon=1$}}} &
  DPDGC &
  \textbf{38.18 $\pm$ 1.48} &
  \textbf{53.83 $\pm$ 1.11} &
  62.04 $\pm$ 0.33 &
  \textbf{88.59 $\pm$ 0.16} &
  \textbf{87.74 $\pm$ 0.26} &
  \textbf{77.71 $\pm$ 0.95} &
  {\ul 92.59 $\pm$ 0.41} \\
 &
  GAP &
  35.15 $\pm$ 0.47 &
  49.47 $\pm$ 0.88 &
  \textbf{69.75 $\pm$ 0.44} &
  87.79 $\pm$ 0.22 &
  \textbf{87.74 $\pm$ 0.20} &
  {\ul 76.95 $\pm$ 0.90} &
  \textbf{92.94 $\pm$ 0.36} \\
 &
  RandEdge+SAGE &
  19.79 $\pm$ 0.69 &
  21.70 $\pm$ 1.23 &
  25.27 $\pm$ 2.00 &
  87.88 $\pm$ 0.18 &
  48.44 $\pm$ 1.48 &
  59.95 $\pm$ 1.98 &
  46.42 $\pm$ 0.55 \\
 &
  DP-MLP &
  34.14 $\pm$ 0.77 &
  46.78 $\pm$ 1.39 &
  51.16 $\pm$ 0.16 &
  87.25 $\pm$ 0.19 &
  85.27 $\pm$ 0.28 &
  76.48 $\pm$ 0.91 &
  91.35 $\pm$ 0.22 \\ \midrule
\multirow{3}{*}{\rotatebox[origin=c]{90}{\parbox{0.8cm}{\centering $N_1$\\$\epsilon=16$}}} &
  DPDGC &
  \textbf{42.71 $\pm$ 1.43} &
  \textbf{48.63 $\pm$ 1.78} &
  \textbf{80.94 $\pm$ 0.27} &
  84.33 $\pm$ 0.40 &
  \textbf{83.49 $\pm$ 0.29} &
  59.98 $\pm$ 0.81 &
  \textbf{88.38 $\pm$ 0.44} \\
 &
  GAP &
  33.82 $\pm$ 0.60 &
  38.68 $\pm$ 0.59 &
  51.57 $\pm$ 0.28 &
  85.28 $\pm$ 0.14 &
  77.50 $\pm$ 0.20 &
  54.36 $\pm$ 1.14 &
  81.27 $\pm$ 0.31 \\
 &
  DP-MLP &
  34.46 $\pm$ 1.09 &
  38.19 $\pm$ 1.97 &
  50.12 $\pm$ 0.22 &
  \textbf{85.72 $\pm$ 0.11} &
  80.01 $\pm$ 0.37 &
  \textbf{64.29 $\pm$ 0.80} &
  85.61 $\pm$ 0.42 \\ \midrule
\multirow{3}{*}{\rotatebox[origin=c]{90}{\parbox{0.8cm}{\centering $N_5$\\$\epsilon=16$}}} &
  DPDGC &
  \textbf{41.00 $\pm$ 1.19} &
  \textbf{47.22 $\pm$ 1.90} &
  \textbf{76.84 $\pm$ 0.36} &
  84.31 $\pm$ 0.46 &
  \textbf{80.60 $\pm$ 0.44} &
  59.56 $\pm$ 0.97 &
  \textbf{87.02 $\pm$ 0.49} \\
 &
  GAP &
  33.82 $\pm$ 0.60 &
  38.68 $\pm$ 0.59 &
  51.57 $\pm$ 0.28 &
  85.28 $\pm$ 0.14 &
  77.50 $\pm$ 0.20 &
  54.36 $\pm$ 1.14 &
  81.27 $\pm$ 0.31 \\
 &
  DP-MLP &
  34.46 $\pm$ 1.09 &
  38.19 $\pm$ 1.97 &
  50.12 $\pm$ 0.22 &
  \textbf{85.72 $\pm$ 0.11} &
  80.01 $\pm$ 0.37 &
  \textbf{64.29 $\pm$ 0.80} &
  85.61 $\pm$ 0.42 \\ \midrule
\multirow{3}{*}{\rotatebox[origin=c]{90}{\parbox{0.8cm}{\centering $N_{25}$\\$\epsilon=16$}}} &
  DPDGC &
  \textbf{40.51 $\pm$ 0.85} &
  \textbf{46.32 $\pm$ 1.87} &
  \textbf{68.66 $\pm$ 0.32} &
  84.27 $\pm$ 0.38 &
  78.25 $\pm$ 0.31 &
  59.26 $\pm$ 0.87 &
  84.82 $\pm$ 0.54 \\
 &
  GAP &
  33.82 $\pm$ 0.60 &
  38.68 $\pm$ 0.59 &
  51.57 $\pm$ 0.28 &
  85.28 $\pm$ 0.14 &
  77.50 $\pm$ 0.20 &
  54.36 $\pm$ 1.14 &
  81.27 $\pm$ 0.31 \\
 &
  DP-MLP &
  34.46 $\pm$ 1.09 &
  38.19 $\pm$ 1.97 &
  50.12 $\pm$ 0.22 &
  \textbf{85.72 $\pm$ 0.11} &
  \textbf{80.01 $\pm$ 0.37} &
  \textbf{64.29 $\pm$ 0.80} &
  \textbf{85.61 $\pm$ 0.42} \\ \midrule
\multirow{4}{*}{\rotatebox[origin=c]{90}{\parbox{1.0cm}{\centering node \\$\epsilon=16$}}} &
  DPDGC &
  \textbf{36.17 $\pm$ 0.62} &
  \textbf{46.43 $\pm$ 1.21} &
  \textbf{56.65 $\pm$ 0.64} &
  84.55 $\pm$ 0.32 &
  76.62 $\pm$ 0.47 &
  58.97 $\pm$ 1.05 &
  82.15 $\pm$ 0.54 \\
 &
  GAP &
  33.82 $\pm$ 0.60 &
  38.68 $\pm$ 0.59 &
  51.57 $\pm$ 0.28 &
  85.28 $\pm$ 0.14 &
  77.50 $\pm$ 0.20 &
  54.36 $\pm$ 1.14 &
  81.27 $\pm$ 0.31 \\
 &
  DP-SAGE &
  19.81 $\pm$ 0.99 &
  20.96 $\pm$ 1.27 &
  32.15 $\pm$ 0.78 &
  39.68 $\pm$ 0.70 &
  39.13 $\pm$ 0.52 &
  15.86 $\pm$ 1.55 &
  31.41 $\pm$ 0.93 \\
 &
  DP-MLP &
  34.46 $\pm$ 1.09 &
  38.19 $\pm$ 1.97 &
  50.12 $\pm$ 0.22 &
  \textbf{85.72 $\pm$ 0.11} &
  \textbf{80.01 $\pm$ 0.37} &
  \textbf{64.29 $\pm$ 0.80} &
  \textbf{85.61 $\pm$ 0.42} \\ \bottomrule
\end{tabular}

\vspace{-0.6cm}
\end{table}

\begin{figure}[t]
    \centering
    \includegraphics[width=\linewidth]{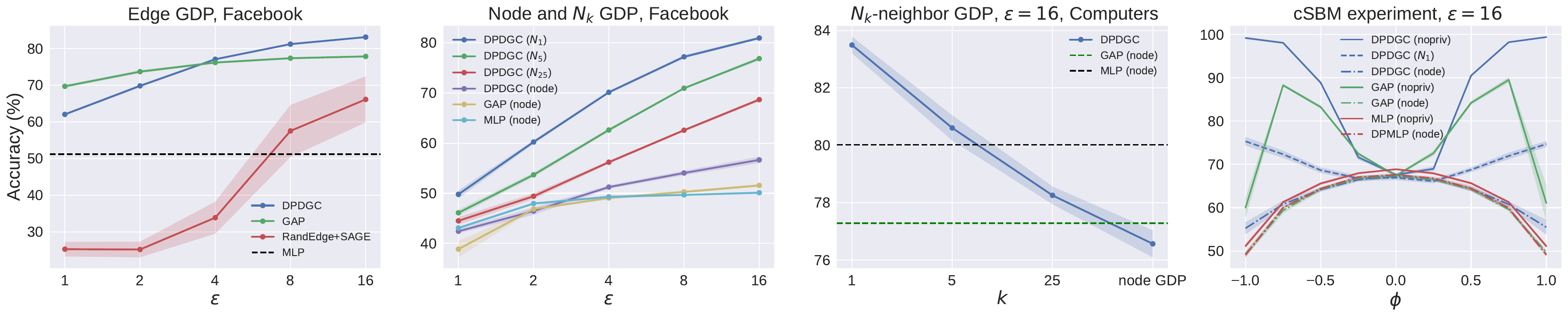}
    \vspace{-.50cm}
    \caption{(a): Trade-off between utility and the parameter $\epsilon$ for edge-GDP on the Facebook dataset. (b): Trade-off between utility and the parameter $\epsilon$ for node and $k$-neighbor ($N_k$) GDP on the Facebook dataset. (c): Trade-off between utility and the parameter $k$ for $k$-neighbor GDP. (d): The cSBM experiment, which shows that when the  graph information is strong enough (i.e., $|\phi|$ is large enough), DPDGC can indeed outperform DP-MLP.}
    \label{fig:GDP_exp}
    \vspace{-0.5cm}
\end{figure}

\section{Conclusions}
We performed an analysis of a novel notion of Graph Differential Privacy (GDP), specifically tailored to graph learning settings. Our analysis established theoretical privacy guarantees for both model weights and predictions. In addition, to offer multigranular protection for the graph topology, we introduced the concept of $k$-neighbor-level adjacency, which is a relaxation of standard node-level adjacency. This allows for controlling the strength of privacy protection for node neighborhood information via a parameter $k$. The supporting GDP approach ensured a flexible trade-off between utility and topology privacy for graph learning methods. The GDP analysis also revealed that standard graph convolution designs failed to offer this trade-off. To provably mitigate the problem associated with standard convolutions, we introduced Differentially Private Decoupled Graph Convolution (DPDGC), a model that comes with GDP guarantees. Extensive experiments conducted on seven node classification benchmark datasets and synthetic cSBM datasets demonstrated the superior privacy-utility trade-offs offered by DPDGC when compared to existing DP-GNNs that rely on standard graph convolution designs.

\begin{ack}
This work was funded by NSF grants 1816913, 1956384, and CCF-2213223.  P. Li is supported by NSF awards OAC-2117997, IIS-2239565.  Wei-Ning was supported by a Stanford Graduate Fellowship. The authors would like to thank Sina Sajadmanesh for answering questions regarding the GAP method. The authors would like to thank the anonymous reviewers and area chair for their feedback and effort, which helped a lot in improving the manuscript. 
\end{ack}

\bibliography{Ref}
\bibliographystyle{IEEEtran}

\newpage
\appendix

\section{Limitations and Broader Impacts}\label{apx:ALB}

\textbf{Limitations.} While our experimental results demonstrate several benefits of using DPDGC designs over standard convolutions, we do not believe that the current DPDGC model is the ultimate solution for GDP-aware graph learning methods. To support this claim, we note that the nonprivate state-of-the-art performance for learning on large-scale homophilic graphs is achieved by standard graph convolution models~\cite{sun2021scalable,zhang2022graph}. Our experiments for edge GDP settings also reveal that standard graph convolution designs (such as GAP) appear to work well when the parameter $\epsilon$ is small. We therefore conjecture that further improvements for decoupled graph convolution designs are possible.

\textbf{Broader Impacts.} We are not aware of any negative social impacts of our work. In fact, our work establishes a formal DP framework for graph learning termed GDP. This is beneficial to many applications that require rigorous protection of the privacy of users when their sensitive information is stored as a graph dataset to be leveraged by graph learning methods. Note that DP is the gold standard for quantifying the privacy of user data and it has found widespread applications. Given the popularity of graph learning methods and the need for user privacy protection, we believe that our work will actually have a positive social impact. 

\section{Generalized adaptive composition theorem}\label{apx:gdp-composition}

In order to ensure that a graph learning method satisfies GDP, we need to follow a commonly employed approach that involves the following steps: (1) introducing appropriate perturbations to the learned graph representation (e.g., $\mathbf{H}^{(0)}$ for GAP or $\mathbf{Z}$ for DP-DGC as shown in Figure~\ref{fig:GDP_baselines}), and (2) training the downstream DP-MLP using DPSGD. Subsequently, employing composition theorems allows for establishing an overall privacy budget. However, as we argued in Section~\ref{sec:gdp}, it is important to establish ``partial DP'' guarantees for intermediate node embeddings $\mathbf{H}^{(k)}$ or $\mathbf{Z}$ (i.e., ensure that only $\mathbf{Z}_{\setminus r}$ is DP and exclude $\mathbf{Z}_r$ from latter mechanisms, as in Theorem~\ref{thm:Z-ndp}). When feeding the entire $\mathbf{Z}$ to the subsequent mechanism $\mathcal{B}(\mathbf{Z}_{\setminus r},\mathbf{Z}_r)$, part of the input may not be DP and may exhibit correlations with other (processed) data components. This renders the standard composition theorem for DP \cite{dwork2014algorithmic, kairouz2015composition, mironov2017Renyi} inapplicable. To address this issue, we developed a novel generalized composition theorem that can handle potential dependencies.

\begin{theorem}\label{thm:gdp-composition}
    Let $\mathcal{A}: \mathcal{D} \rightarrow (\mathcal{A}_1(\mathcal{D}), \mathcal{A}_2(\mathcal{D})) \in \mathcal{Z}_1 \times \mathcal{Z}_2$ satisfy the $(\alpha,\varepsilon_1)$-RDP constraint in its first output argument, i.e., $D_\alpha\left(\mathcal{A}_1(\mathcal{D})\middle\Vert \mathcal{A}_1(\mathcal{D'})  \right) \leq \varepsilon_1$. 
    Let $\mathcal{B}: \mathcal{Z}_1 \times \mathcal{Z}_2 \rightarrow \mathcal{W}$ satisfy the  $(\alpha,\varepsilon_2)$-RDP constraint in its second input argument, i.e., $\max_{\mathbf{z}_1, \mathbf{z}_2, \mathbf{z}_2'} D_\alpha\left(\mathcal{B}(\mathbf{z}_1, \mathbf{z}_2) \middle\Vert \mathcal{B}(\mathbf{z}_1, \mathbf{z}'_2)  \right) \leq \varepsilon_2$. 
    Let the random noise used in $\mathcal{A}$ and $\mathcal{B}$ be independent. Then $\left( \mathcal{A}_1(\mathcal{D}), \mathcal{B}(\mathcal{A}(\mathcal{D})) \right)$ jointly meets the $(\alpha,\varepsilon_1+\varepsilon_2)$-RDP guarantee.
\end{theorem}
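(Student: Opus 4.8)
The plan is to marginalize out the non-private intermediate output $\mathcal{A}_2(\mathcal{D})$, condition on the first (private) output, and reduce the claim to an adaptive-composition computation in the style of Mironov's RDP composition theorem. Write $p_1,q_1$ for the densities of $\mathcal{A}_1(\mathcal{D}),\mathcal{A}_1(\mathcal{D}')$; write $\mu(z_1,z_2),\mu'(z_1,z_2)$ for the joint laws of $\mathcal{A}(\mathcal{D})=(\mathcal{A}_1(\mathcal{D}),\mathcal{A}_2(\mathcal{D}))$ and $\mathcal{A}(\mathcal{D}')$; and let $\kappa(w\mid z_1,z_2)$ denote the transition kernel of $\mathcal{B}$. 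By the independence-of-noise hypothesis, $\kappa$ depends on the data only through $(z_1,z_2)$ and is therefore the \emph{same} kernel under $\mathcal{D}$ and $\mathcal{D}'$. The observable output $(\mathcal{A}_1(\mathcal{D}),\mathcal{B}(\mathcal{A}(\mathcal{D})))$ then has density $P(z_1,w)=p_1(z_1)\,\tilde P(w\mid z_1)$ with conditional $\tilde P(w\mid z_1)=\int \mu(z_2\mid z_1)\,\kappa(w\mid z_1,z_2)\,dz_2$, and analogously $Q(z_1,w)=q_1(z_1)\,\tilde Q(w\mid z_1)$ for $\mathcal{D}'$.

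For $\alpha>1$ I would expand the exponentiated Rényi divergence and factor the integrand, obtaining
\begin{equation*}
e^{(\alpha-1)D_\alpha(P\|Q)}=\int p_1(z_1)^\alpha\,q_1(z_1)^{1-\alpha}\;e^{(\alpha-1)D_\alpha(\tilde P(\cdot\mid z_1)\,\|\,\tilde Q(\cdot\mid z_1))}\,dz_1.
\end{equation*}
If the per-$z_1$ conditional bound $D_\alpha(\tilde P(\cdot\mid z_1)\|\tilde Q(\cdot\mid z_1))\le\varepsilon_2$ holds for every $z_1$, then pulling out the factor $e^{(\alpha-1)\varepsilon_2}$ (legitimate since $\alpha-1>0$) leaves $\int p_1(z_1)^\alpha q_1(z_1)^{1-\alpha}\,dz_1=e^{(\alpha-1)D_\alpha(\mathcal{A}_1(\mathcal{D})\|\mathcal{A}_1(\mathcal{D}'))}\le e^{(\alpha-1)\varepsilon_1}$ by the first hypothesis; taking logarithms yields $D_\alpha(P\|Q)\le\varepsilon_1+\varepsilon_2$, which is the claim.

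The crux, and the step I expect to be the main obstacle, is the per-$z_1$ conditional bound. Here $\tilde P(\cdot\mid z_1)$ and $\tilde Q(\cdot\mid z_1)$ are both mixtures of the \emph{same} family of kernels $\{\kappa(\cdot\mid z_1,z_2)\}_{z_2}$, but with \emph{different} mixing measures $\mu(\cdot\mid z_1)$ and $\mu'(\cdot\mid z_1)$ — precisely because $\mathcal{A}_2$ is not private, the law of the second input to $\mathcal{B}$ shifts from $\mathcal{D}$ to $\mathcal{D}'$. The worst-case hypothesis on $\mathcal{B}$ only controls $D_\alpha(\kappa(\cdot\mid z_1,z_2)\|\kappa(\cdot\mid z_1,z_2'))$ for fixed pairs $(z_2,z_2')$, so a mixture inequality with a common weight cannot be applied directly. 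To repair the weight mismatch I would introduce the product coupling $\pi=\mu(\cdot\mid z_1)\otimes\mu'(\cdot\mid z_1)$ and rewrite $\tilde P(\cdot\mid z_1)=\int \kappa(\cdot\mid z_1,z_2)\,\pi(dz_2,dz_2')$ and $\tilde Q(\cdot\mid z_1)=\int \kappa(\cdot\mid z_1,z_2')\,\pi(dz_2,dz_2')$, so that both become mixtures over the common index $(z_2,z_2')$ with identical weights $\pi$. Joint quasi-convexity of Rényi divergence (a standard property, valid for all $\alpha\ge 0$) then gives $D_\alpha(\tilde P(\cdot\mid z_1)\|\tilde Q(\cdot\mid z_1))\le \sup_{(z_2,z_2')}D_\alpha(\kappa(\cdot\mid z_1,z_2)\|\kappa(\cdot\mid z_1,z_2'))\le\varepsilon_2$, the final inequality being exactly the $(\alpha,\varepsilon_2)$-RDP-in-second-argument assumption on $\mathcal{B}$. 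The only remaining technical care is extending joint quasi-convexity from finite to continuous mixtures, which is routine, so this closes the argument.
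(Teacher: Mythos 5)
Your proof is correct and follows essentially the same route as the paper's: factor out the $\mathcal{A}_1$ marginal by conditioning on $z_1$, bound the conditional divergence of the $z_2$-mixtures by the worst-case kernel divergence via joint quasi-convexity of the R\'enyi divergence, and then apply the RDP bound on $\mathcal{A}_1$. Your product-coupling step $\pi=\mu\otimes\mu'$ is a welcome explicit justification of the mixture inequality with mismatched weights, which the paper's step (b) invokes more tersely under the name of joint quasi-convexity.
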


\begin{proof}   
    For ease of explanation, we assume that the density (and the conditional density) of $\mathcal{A}$ and $\mathcal{B}$ with respect to the Lebesgue measure exist. Let
    \begin{itemize}
        \item $(\mathcal{A}_1(\mathcal{D}), \mathcal{A}_2(\mathcal{D})) := (\mathbf{Z}_1, \mathbf{Z}_2)$ and $(\mathcal{A}_1(\mathcal{D}'), \mathcal{A}_2(\mathcal{D}')) := (\mathbf{Z}'_1, \mathbf{Z}'_2)$;
        \item $\mathcal{B}(\mathcal{A}(\mathcal{D})) := \mathbf{W}$ and $\mathcal{B}(\mathcal{A}(\mathcal{D}')) := \mathbf{W}'$.
    \end{itemize}
    In addition, let $f_{\mathbf{W}}(\mathbf{w})$, $f_{\mathbf{Z}_1}(\mathbf{z}_1)$, and $f_{\mathbf{Z}_2}(\mathbf{z}_2)$ be the density of $\mathbf{W}, \mathbf{Z}_1$ and $\mathbf{Z}_2,$ respectively.
    Observe that 
    \begin{align*}
        &D_\alpha\left( (\mathbf{Z}_1, \mathbf{W}(\mathbf{Z}_1, \mathbf{Z}_2)) \middle\Vert (\mathbf{Z}_1', \mathbf{W}(\mathbf{Z}_1', \mathbf{Z}_2'))\right) \\
        & = \frac{1}{\alpha-1}\log\E_{\mathbf{w}, \mathbf{z}_1}\left[ \left(\frac{\int_{\mathbf{z}_2} f_{\mathbf{W}|\mathbf{Z}_1=\mathbf{z}_1, \mathbf{Z}_2 = \mathbf{z}_2}\left( \mathbf{W}(\mathbf{z}_1, \mathbf{z}_2) = \mathbf{w} \right)\cdot f_{\mathbf{Z}_2|\mathbf{Z}_1=\mathbf{z}_1}(\mathbf{z}_2)\cdot f_{\mathbf{Z}_1}(\mathbf{z}_1) d\mathbf{z}_2}{ \int_{\mathbf{z}_2} f_{\mathbf{W}|\mathbf{Z}'_1=\mathbf{z}_1, \mathbf{Z}'_2 = \mathbf{z}'_2}\left( \mathbf{W}(\mathbf{z}_1, \mathbf{z}_2) = \mathbf{w} \right)\cdot f_{\mathbf{Z}'_2|\mathbf{Z}'_1=\mathbf{z}_1}(\mathbf{z}_2)\cdot f_{\mathbf{Z}'_1}(\mathbf{z}_1)d\mathbf{z}_2'} \right)^\alpha\right]\\
        & \overset{\text{(a)}}{\leq} \frac{1}{\alpha-1}\log\E_{\mathbf{z}_1}\Bigg[\E_{\mathbf{w}}\left[ \left(\frac{\int_{\mathbf{z}_2} f_{\mathbf{W}|\mathbf{Z}_1=\mathbf{z}_1, \mathbf{Z}_2 = \mathbf{z}_2}\left( \mathbf{W}(\mathbf{z}_1, \mathbf{z}_2) = \mathbf{w} \right)\cdot f_{\mathbf{Z}_2|\mathbf{Z}_1=\mathbf{z}_1}(\mathbf{z}_2) d\mathbf{z}_2}{ \int_{\mathbf{z}_2} f_{\mathbf{W}|\mathbf{Z}'_1=\mathbf{z}_1, \mathbf{Z}'_2 = \mathbf{z}'_2}\left( \mathbf{W}(\mathbf{z}_1, \mathbf{z}_2) = \mathbf{w} \right)\cdot f_{\mathbf{Z}'_2|\mathbf{Z}'_1=\mathbf{z}_1}(\mathbf{z}_2)d\mathbf{z}_2'} \right)^\alpha \middle\vert \mathbf{z}_1\right]\\
        & \hspace{32.3em} \cdot \left(\frac{f_{\mathbf{Z}_1}(\mathbf{z}_1)}{f_{\mathbf{Z}'_1}(\mathbf{z}_1)}\right)^{\alpha}\Bigg]\\
        & \overset{\text{(b)}}{\leq} \frac{1}{\alpha-1}\log\E_{\mathbf{z}_1}\left[\max_{\mathbf{z}_2, \mathbf{z}_2'}\left(\E_{\mathbf{w}}\left[ \left(\frac{ f_{\mathbf{W}|\mathbf{Z}_1=\mathbf{z}_1, \mathbf{Z}_2 = \mathbf{z}_2}\left( \mathbf{W}(\mathbf{z}_1, \mathbf{z}_2) = \mathbf{w} \right)}{ f_{\mathbf{W}|\mathbf{Z}'_1=\mathbf{z}_1, \mathbf{Z}'_2 = \mathbf{z}'_2}\left( \mathbf{W}(\mathbf{z}_1, \mathbf{z}_2) = \mathbf{w} \right)} \right)^\alpha \middle\vert \mathbf{z}_1\right] \right)\cdot \left(\frac{f_{\mathbf{Z}_1}(\mathbf{z}_1)}{f_{\mathbf{Z}'_1}(\mathbf{z}_1)}\right)^{\alpha}\right]\\
        & \overset{\text{(c)}}{\leq} \frac{1}{\alpha-1}\log\E_{\mathbf{z}_1} \left[ e^{\varepsilon_1 \cdot (\alpha-1)}\left(\frac{f_{\mathbf{Z}_1}(\mathbf{z}_1)}{f_{\mathbf{Z}'_1}(\mathbf{z}_1)}\right)^{\alpha}\right]\\
        & \overset{\text{(d)}}{\leq} \varepsilon_1 + \varepsilon_2,
\end{align*}
    where (a) holds by the tower rule of conditional expectations; (b) holds due to the joint quasi-convexity of the R\'enyi divergence \cite{van2014renyi}; (c) holds since $\mathcal{B}$ is $\varepsilon_2$ R\'enyi DP; and (d) holds because $\mathcal{A}_1$ is $\varepsilon_2$ R\'enyi DP.     
\end{proof}

\begin{remark}
Note that if $\mathcal{A}_2(\mathcal{D})$ is independent of $\mathcal{A}_1(\mathcal{D})$, then the above composition theorem reduces to the standard sequential composition~\cite{mironov2017Renyi}, which follows directly from the chain rule for the R\'enyi divergence. However, in the analysis of GDP, we need a stronger result capable of handling the dependence between $\mathcal{A}_1(\mathcal{D})$ and $\mathcal{A}_2(\mathcal{D})$.
\end{remark}

In the privacy analysis of our proposed DPDGC, we set $\mathcal{A}_1(\mathcal{D})$ and $\mathcal{A}_2(\mathcal{D})$ in Theorem~\ref{thm:gdp-composition} to $\mathbf{Z}_{\setminus r}$ and $\mathbf{Z}_r,$ respectively, and let $\mathcal{B}(\mathcal{D})$ be the weights of the $\text{DP-MLP}_W$ trained via DP-SGD (see Appendix~\ref{apx:formal_GDP}). Nevertheless, we note that since the Gaussian noise samples added to $\mathbf{Z}_{\setminus r}$ and $\mathbf{Z}_r$ are indeed independent in DPDGC, by grouping $(\mathcal{A}_2(\mathcal{D}), \mathcal{B}(\mathcal{D}))$, one can directly apply the classical composition theorem \cite{mironov2017Renyi} to $\mathcal{A}_1(\mathcal{D})$ and $(\mathcal{A}_2(\mathcal{D}), \mathcal{B}(\mathcal{D}))$ and arrive at the same conclusion. Similarly, in the one-hop GAP illustrated in Figure~\ref{fig:GDP_baselines}, we applied the same privacy analysis, with $\mathcal{A}_1(\mathcal{D})$, $\mathcal{A}_2(\mathcal{D})$, and $\mathcal{B}(\mathcal{D})$ set to $\mathbf{H}^{(0)}_{\setminus r}$, $\mathbf{H}^{(0)}_r$, and the weights of $\text{DP-MLP}_W$, respectively, which allowed us to obtain the GDP guarantee. 

However, it is worth noting that for a $L$-hop GAP (i.e., when aggregation is performed $L$ times), the DP noise introduced in $\mathbf{H}^{(L)}_{\setminus r}$ and that introduced in $\mathbf{H}^{(L)}_r$ are no longer independent due to the aggregation step, and hence the standard composition theorem may not be applicable. In this case, our generalized composition theorem rigorously leads to the desired GDP guarantees.

In summary, our generalized composition theorem (Theorem~\ref{thm:gdp-composition}) provides a ``cleaner argument'' for the case that merely part of intermediate node embedding $\mathbf{Z}$ and $\mathbf{H}^{(k)}$ are DP (i.e., $\mathbf{Z}_{\setminus r}$ and $\mathbf{H}^{(k)}_{\setminus r}$ are DP but not the entire $\mathbf{Z}$ and $\mathbf{H}^{(k)}$). Furthermore, our Theorem~\ref{thm:gdp-composition} also allows for handling more complicated cases (i.e., when the noise in $\mathcal{A}_1(\mathcal{D})$ and the one in $\mathcal{A}_2(\mathcal{D})$ are not independent).

\section{Proof of Theorem~\ref{thm:Z-ndp}}\label{apx:Z-ndp}
\begin{theorem*}
    For any $\alpha>1$ and $\mathcal{D}\stackrel{N}{\sim}\mathcal{D}^\prime$, assume that $D_{\alpha}((\mathbf{W}^{(A)},\mathbf{b})||(\mathbf{W}^{(A)},\mathbf{b})^\prime)\leq \gamma_1$ and that both $\mathbf{A}$, $\mathbf{A}^\prime$ has bounded out-degree $D$. Let the replaced node index be $r$ and $\mathbf{Z}_{\setminus r}$ be the matrix $\mathbf{Z}$ excluding its $r^{th}$ row. Then the embedding $\mathbf{Z}$ in DPDGC satisfies $D_{\alpha}(\mathbf{Z}_{\setminus r}||\mathbf{Z}^\prime_{\setminus r})\leq \gamma_1 + \frac{2D \alpha }{2s^2}$.
\end{theorem*}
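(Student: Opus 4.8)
The plan is to split the privacy cost of $\mathbf{Z}_{\setminus r}$ into two independently accounted contributions via adaptive composition: the cost $\gamma_1$ of releasing the (private) weights $(\mathbf{W}^{(A)},\mathbf{b})$, and the cost of the Gaussian mechanism that produces $\mathbf{Z}_{\setminus r}=[\mathbf{A}\mathbf{W}^{(A)}+\mathbf{b}]_{\setminus r}+\mathbf{N}_{\setminus r}$ once the weights are fixed. Concretely, I would treat $(\mathbf{W}^{(A)},\mathbf{b})$ as the output of a first mechanism, which is $\gamma_1$-RDP by hypothesis, and then, conditioned on any realization of these weights, view $\mathbf{Z}_{\setminus r}$ as a Gaussian mechanism applied to the deterministic quantity $[\mathbf{A}\mathbf{W}^{(A)}+\mathbf{b}]_{\setminus r}$, whose only data dependence (with the weights held fixed) is through $\mathbf{A}$. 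The statement then follows by combining the two RDP budgets using a composition theorem (the generalized version, Theorem~\ref{thm:gdp-composition}, or standard RDP composition~\cite{mironov2017Renyi}), together with the observation that $\mathbf{Z}_{\setminus r}$ alone is a post-processing of the joint release of the weights and $\mathbf{Z}_{\setminus r}$.

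The heart of the argument is the $\ell_2$-sensitivity of the map $\mathbf{A}\mapsto[\mathbf{A}\mathbf{W}^{(A)}+\mathbf{b}]_{\setminus r}$ under node-level adjacency, with the weights held fixed. First I would observe that replacing the $r$-th row and column of $\mathbf{A}$ leaves, for every $i\neq r$, only the single entry $\mathbf{A}_{ir}$ free to change (the row-$r$ edits never enter rows $i\neq r$ of the product), so that $[\mathbf{A}\mathbf{W}^{(A)}]_i-[\mathbf{A}^\prime\mathbf{W}^{(A)}]_i=(\mathbf{A}_{ir}-\mathbf{A}^\prime_{ir})\mathbf{W}^{(A)}_r$. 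Splitting over the same three cases as in the proof of Theorem~\ref{thm:sign-Z-ndp}, rows $i\in N(r)\setminus N^\prime(r)$ and $i\in N^\prime(r)\setminus N(r)$ each contribute a squared norm of $\|\mathbf{W}^{(A)}_r\|^2=1$, by the row-normalization of $\mathbf{W}^{(A)}$. The decisive difference from GAP is that common-neighbor rows $i\in N(r)\cap N^\prime(r)$ now contribute exactly zero, since $\mathbf{A}_{ir}=\mathbf{A}^\prime_{ir}$ and the multiplying vector $\mathbf{W}^{(A)}_r$ is the \emph{same} fixed (conditioned) quantity on both datasets; there is no $\mathbf{H}_r-\mathbf{H}^\prime_r$ term as in the coupled convolution. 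Summing, the squared Frobenius sensitivity equals $|N(r)\setminus N^\prime(r)|+|N^\prime(r)\setminus N(r)|$, which is maximized when the two neighborhoods are disjoint; since each out-degree (column sum) is at most $D$, this is at most $2D$, i.e. an $\ell_2$-sensitivity of $\sqrt{2D}$.

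Finally I would plug this into the Gaussian-mechanism RDP bound: noise of standard deviation $s$ and $\ell_2$-sensitivity $\Delta=\sqrt{2D}$ yield $D_\alpha\leq \alpha\Delta^2/(2s^2)=2D\alpha/(2s^2)$, and this bound is uniform over the fixed weights precisely because row-normalization forces $\|\mathbf{W}^{(A)}_r\|=1$ independently of the realized $\mathbf{W}^{(A)}$ (so the adaptive-composition requirement of a worst-case-over-$\theta$ second-stage budget is met). Composing with the $\gamma_1$ budget for the weights yields the claimed $\gamma_1+\frac{2D\alpha}{2s^2}$. The main obstacle, and really the only nonroutine point, is the sensitivity computation: one must recognize that fixing $\mathbf{W}^{(A)}$ annihilates the common-neighbor contribution (this is what removes the factor-of-two blow-up behind GAP's $4D$ term) and then verify that disjoint neighborhoods, rather than maximally overlapping ones, are the true worst case, giving exactly $2D$ contributing rows. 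I would also confirm tightness by exhibiting adjacent $\mathbf{A},\mathbf{A}^\prime$ with $N(r)\cap N^\prime(r)=\emptyset$ and $|N(r)|=|N^\prime(r)|=D$.
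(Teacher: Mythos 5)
Your proposal is correct and follows essentially the same route as the paper's proof: the same row-by-row sensitivity analysis of $[\mathbf{A}\mathbf{W}^{(A)}]_{\setminus r}$ with the key observation that common-neighbor rows contribute zero because $\mathbf{W}^{(A)}$ is held fixed across the adjacent datasets, the same worst case of disjoint neighborhoods giving squared sensitivity $2D$, and the same final composition with the $\gamma_1$ budget for the weights. Your added remarks on the explicit identity $[\mathbf{A}\mathbf{W}^{(A)}]_i-[\mathbf{A}^\prime\mathbf{W}^{(A)}]_i=(\mathbf{A}_{ir}-\mathbf{A}^\prime_{ir})\mathbf{W}^{(A)}_r$, on the uniformity of the second-stage budget over realized weights, and on tightness are consistent with (and slightly more explicit than) the paper's argument.
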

\begin{proof}
    We start by examining $\|[\mathbf{A}^\prime\mathbf{W}^{(A)}]_{\setminus r} - [\mathbf{A}\mathbf{W}^{(A)}]_{\setminus r}\|_F$. Note that
    \begin{align*}
        & \|[\mathbf{A}^\prime\mathbf{W}^{(A)}]_{\setminus r} - [\mathbf{A}\mathbf{W}^{(A)}]_{\setminus r}\|_F^2  = \sum_{i\in [n]\setminus\{r\}}\|(\mathbf{A}^\prime_i - \mathbf{A}_i)\mathbf{W}^{(A)}\|^2,
    \end{align*}
    which is the sum of row-norms of all nodes except for node $r$. Let us denote the out-neighborhood of node $r$ with respect to $\mathbf{A}$ as  $N(r) = \{i:\mathbf{A}_{ir}=1\}$. Similarly, we also have $N^\prime(r) = \{i:\mathbf{A}^\prime_{ir}=1\}$. There are four possible cases for $i\in [n]\setminus\{r\}$: (1) $i\in N(r)\setminus N^\prime(r)$, (2) $i\in N^\prime(r)\setminus N(r)$, (3) $i\in N(r)\cap N^\prime(r)$, and (4) $i\notin N(r)\cup N^\prime(r)$. Clearly, for $i$ covered by cases (3) and (4), the corresponding row norm is $0$ since $\mathbf{A}^\prime_i=\mathbf{A}_i$ for $i\in N(r)\cap N^\prime(r)$ and $i\notin N(r)\cup N^\prime(r)$. For cases (1) and (2), we have
    \begin{align*}
        & \|(\mathbf{A}^\prime_i - \mathbf{A}_i)\mathbf{W}^{(A)}\|^2 = \|\mathbf{e}_r^T\mathbf{W}^{(A)}\|^2 \stackrel{(a)}{=} 1,
    \end{align*}
    where $(a)$ is due to the fact that $\mathbf{W}^{(A)}$ is row-normalized. By the bounded out-degree assumption, we know that $\max(|N(r)|,|N^\prime(r)|)\leq D$. Thus, the worst case upper bound of $\|[\mathbf{A}^\prime\mathbf{W}^{(A)}]_{\setminus r} - [\mathbf{A}\mathbf{W}^{(A)}]_{\setminus r}\|_F^2$ arises when $N(r)$ and $N^\prime(r)$ are disjoint, which results in a contribution of $2D$ for cases (1) and (2). As a result, we have
    \begin{align*}
       & \|[\mathbf{A}^\prime\mathbf{W}^{(A)}]_{\setminus r} - [\mathbf{A}\mathbf{W}^{(A)}]_{\setminus r}\|_F^2  = \sum_{i\in [n]\setminus\{r\}}\|(\mathbf{A}^\prime_i - \mathbf{A}_i)\mathbf{W}^{(A)}\|^2 \leq 2D.
    \end{align*}
    This implies a sensitivity bound $\sqrt{2D}$. As a result, using the Gaussian noise mechanism with standard deviation $s$ leads to the term $\frac{2D \alpha}{2s^2}$ in the divergence bound. Finally, since the entire $\mathbf{W}^{(A)}$ is DP by the assumption, applying the standard DP composition theorem~\cite{mironov2017Renyi} completes the proof.   
\end{proof}

Regarding the assumption used, it can met by applying a standard DP-optimizer with an group size $D+1$, where $\gamma_1$ depends on the noise multiplier of the DP-optimizer. This is due to the fact that the out-degree is bounded by $D$, so replacing one node can affect at most $D$ neighbors. 

\section{Proof of Theorem~\ref{thm:Z-nk}}\label{apx:Z-nk}
\begin{theorem*}
    For any $\mathcal{D}\stackrel{N_k}{\sim}\mathcal{D}^\prime$, let the index of the replaced node be $r$; also, let $\mathbf{Z}_{\setminus r}$ be the matrix $\mathbf{Z}$ with its $r^{th}$ row excluded, and assume that $D_{\alpha}((\mathbf{W}^{(A)},\mathbf{b})||(\mathbf{W}^{(A)},\mathbf{b})^\prime)\leq \gamma_1$. For any $\alpha>1$, the embedding $\mathbf{Z}$ in DPDGC satisfies $D_{\alpha}(\mathbf{Z}_{\setminus r}||\mathbf{Z}^\prime_{\setminus r})\leq \gamma_1 + \frac{k \alpha}{2s^2}$.
\end{theorem*}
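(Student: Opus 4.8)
The plan is to mirror the sensitivity analysis of Theorem~\ref{thm:Z-ndp}, but to exploit the fact that $k$-neighbor-level adjacency confines the perturbation of the adjacency matrix, once the excluded $r$-th output row is removed, to at most $k$ entries of the $r$-th column. Concretely, I would first write the Frobenius sensitivity of the noiseless aggregation term as
\begin{align*}
\|[\mathbf{A}^\prime\mathbf{W}^{(A)}]_{\setminus r} - [\mathbf{A}\mathbf{W}^{(A)}]_{\setminus r}\|_F^2 = \sum_{i\in [n]\setminus\{r\}}\|(\mathbf{A}^\prime_i - \mathbf{A}_i)\mathbf{W}^{(A)}\|^2,
\end{align*}
which is exactly the quantity analyzed in the node case, specialized to the $k$-neighbor adjacency structure.

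The crux is the next observation. Under Definition~\ref{def:adjacent_data_nn}, $\mathbf{A}$ and $\mathbf{A}^\prime$ differ only within the $r$-th row and the $r$-th column. Hence for any row index $i\neq r$, the vectors $\mathbf{A}_i$ and $\mathbf{A}^\prime_i$ can differ in the single coordinate $r$ only, so $\mathbf{A}^\prime_i - \mathbf{A}_i = (\mathbf{A}^\prime_{ir} - \mathbf{A}_{ir})\mathbf{e}_r^T$. Using the row-normalization constraint $\|\mathbf{e}_r^T\mathbf{W}^{(A)}\| = \|\mathbf{W}^{(A)}_r\| = 1$, each summand collapses to $|\mathbf{A}^\prime_{ir} - \mathbf{A}_{ir}|^2$, which equals $1$ exactly when the column entry $(i,r)$ is one of the replaced entries and $0$ otherwise. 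Because $k$-neighbor adjacency replaces at most $k$ entries of the $r$-th column, at most $k$ summands are nonzero, giving $\|[\mathbf{A}^\prime\mathbf{W}^{(A)}]_{\setminus r} - [\mathbf{A}\mathbf{W}^{(A)}]_{\setminus r}\|_F^2 \leq k$, i.e. an $\ell_2$ sensitivity of $\sqrt{k}$.

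From there the argument is routine. The Gaussian mechanism with per-entry standard deviation $s$, applied to a query of sensitivity $\sqrt{k}$, contributes $\frac{(\sqrt{k})^2\alpha}{2s^2} = \frac{k\alpha}{2s^2}$ to the order-$\alpha$ R\'enyi divergence of $\mathbf{Z}_{\setminus r}$. Finally, since $(\mathbf{W}^{(A)},\mathbf{b})$ is assumed to be $\gamma_1$-RDP, I would combine the two contributions via the composition theorem~\cite{mironov2017Renyi} (or, for a fully rigorous treatment, the generalized adaptive version Theorem~\ref{thm:gdp-composition}) to obtain $D_\alpha(\mathbf{Z}_{\setminus r}\|\mathbf{Z}^\prime_{\setminus r}) \leq \gamma_1 + \frac{k\alpha}{2s^2}$.

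The main obstacle — indeed the only nontrivial step — is correctly isolating which entries of $\mathbf{A}$ can influence $[\mathbf{A}\mathbf{W}^{(A)}]_{\setminus r}$. One must notice that although $k$-neighbor adjacency perturbs up to $k$ entries in both the $r$-th row \emph{and} the $r$-th column, the row perturbations affect only the excluded $r$-th output row, so that merely the $\leq k$ column perturbations survive in the sum over $i\neq r$. This is precisely what replaces the $2D$ bound of the node-level case (Theorem~\ref{thm:Z-ndp}) by $k$ and, notably, removes any need for a bounded out-degree assumption. The row-normalization of $\mathbf{W}^{(A)}$ is what caps each surviving contribution at $1$, and tightness (the worst case being $k$ disjoint changed column entries with $\mathbf{W}^{(A)}_r$ aligned accordingly) follows exactly as in the discussion after Theorem~\ref{thm:Z-ndp}.
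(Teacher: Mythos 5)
Your proposal is correct and follows essentially the same route as the paper's proof: the same Frobenius-norm decomposition over rows $i\neq r$, the observation that only the (at most $k$) replaced entries of the $r$-th \emph{column} survive once the $r$-th output row is excluded, the row-normalization of $\mathbf{W}^{(A)}$ capping each surviving summand at $1$ to give sensitivity $\sqrt{k}$, and composition with the $\gamma_1$-RDP weights. The paper phrases the counting via the symmetric difference of the out-neighborhoods $N(r)$ and $N^\prime(r)$ rather than directly via changed column entries, but the two formulations are identical.
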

\begin{proof}
    The proof is nearly identical to the proof for the node-GDP case (Theorem~\ref{thm:Z-ndp}). The only difference arises when for the case $i\in [n]\setminus \{r\}$. Recall that our goal is to analyze
    \begin{align*}
        & \|[\mathbf{A}^\prime\mathbf{W}^{(A)}]_{\setminus r} - [\mathbf{A}\mathbf{W}^{(A)}]_{\setminus r}\|_F^2  = \sum_{i\in [n]\setminus\{r\}}\|(\mathbf{A}^\prime_i - \mathbf{A}_i)\mathbf{W}^{(A)}\|^2.
    \end{align*}
    Again, there are four possible cases for $i\in [n]\setminus \{r\}$: (1) $i\in N(r)\setminus N^\prime(r)$; (2) $i\in N^\prime(r)\setminus N(r)$; (3) $i\in N(r)\cap N^\prime(r)$; and (4) $i\notin N(r)\cup N^\prime(r)$. Clearly, for $i$ covered by cases (3) and (4), the corresponding row norm is $0$ as in this case,  $\mathbf{A}^\prime_i=\mathbf{A}_i$. For cases (1) and (2), we have
    \begin{align*}
        & \|(\mathbf{A}^\prime_i - \mathbf{A}_i)\mathbf{W}^{(A)}\|^2 = \|\mathbf{e}_r^T\mathbf{W}^{(A)}\|^2 \stackrel{(a)}{=} 1,
    \end{align*}
    where $(a)$ is due to the fact that $\mathbf{W}^{(A)}$ is row-normalized. By the definition of $\mathcal{D}\stackrel{N_k}{\sim}\mathcal{D}^\prime$, we know that there are at most $k$ rows corresponding to cases (1) and (2): 
    \begin{align*}
        \left|\left\{i\in N(r)\setminus N^\prime(r)\right\}\right| + \left|\left\{i\in N(r)^\prime\setminus N(r)\right\}\right| \leq k.
    \end{align*}
    As a result, we have
    \begin{align*}
       & \|[\mathbf{A}^\prime\mathbf{W}^{(A)}]_{\setminus r} - [\mathbf{A}\mathbf{W}^{(A)}]_{\setminus r}\|_F^2  = \sum_{i\in [n]\setminus\{r\}}\|(\mathbf{A}^\prime_i - \mathbf{A}_i)\mathbf{W}^{(A)}\|^2 \leq k.
    \end{align*}
    This implies a sensitivity bound of $\sqrt{k}$. As a result, applying the Gaussian noise mechanism with standard deviation $s$ leads to the term $\frac{k \alpha}{2s^2}$ in the divergence bound. Finally, since the entire $\mathbf{W}^{(A)}$ is DP by assumption, applying the standard DP composition theorem~\cite{mironov2017Renyi} completes the proof.
\end{proof}

\section{Proof of Theorem~\ref{thm:sign-Z-ndp}}\label{apx:sign-Z-ndp}
\begin{theorem*}
    For any $\alpha>1$ and $\mathcal{D}\stackrel{N}{\sim}\mathcal{D}^\prime$ or $\mathcal{D}\stackrel{N_k}{\sim}\mathcal{D}^\prime$, assume that $D_{\alpha}(\mathbf{W}^{(X)}||{\mathbf{W}^{(X)}}^\prime)\leq \gamma_1$ and that both $\mathbf{A}$, $\mathbf{A}^\prime$ have bounded out-degree $D$. Let the replaced node index be $r$ and let $\mathbf{Z}_{\setminus r}$ be the matrix $\mathbf{Z}$ excluding its $r^{th}$ row. Then the embedding $\mathbf{Z}$ used in GAP satisfies $D_{\alpha}(\mathbf{Z}_{\setminus r}||\mathbf{Z}^\prime_{\setminus r})\leq \gamma_1 + \frac{4DL\alpha }{2s^2}$.
\end{theorem*}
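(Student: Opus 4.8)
The plan is to reduce the $L$-hop statement to the sensitivity of a single aggregation step and then accumulate the cost through the composition theorem. Concretely, I would first establish the one-hop bound $D_{\alpha}\!\left(\mathbf{H}^{(1)}_{\setminus r}\middle\|{\mathbf{H}^{(1)}_{\setminus r}}^\prime\right)\leq \gamma_1 + \frac{4D\alpha}{2s^2}$ (writing $\mathbf{H}:=\mathbf{H}^{(0)}$) and then obtain $\mathbf{Z}_{\setminus r}$ by induction over the $L$ hops. Since the cost $\gamma_1$ of releasing $\mathbf{W}^{(X)}$ is paid separately via composition, for the aggregation step I may treat $\mathbf{W}^{(X)}$ as fixed; because the encoder DP-MLP$_X$ acts row-wise and only $\mathbf{X}_r$ is replaced, the difference $\mathbf{H}-\mathbf{H}^\prime$ is then supported solely on row $r$. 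The heart of the argument is the $\ell_2$-sensitivity of $[\mathbf{A}\mathbf{H}]_{\setminus r}$, which I would bound via the row-wise decomposition
\[
\left\|[\mathbf{A}\mathbf{H}]_{\setminus r}-[\mathbf{A}^\prime\mathbf{H}^\prime]_{\setminus r}\right\|_F^2=\sum_{i\in[n]\setminus\{r\}}\left\|[\mathbf{A}\mathbf{H}]_i-[\mathbf{A}^\prime\mathbf{H}^\prime]_i\right\|^2.
\]

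I would then classify each $i\neq r$ by membership in $N(r)$ and $N^\prime(r)$ and apply the identity $[\mathbf{A}\mathbf{H}]_i-[\mathbf{A}^\prime\mathbf{H}^\prime]_i=(\mathbf{A}_i-\mathbf{A}^\prime_i)\mathbf{H}+\mathbf{A}^\prime_i(\mathbf{H}-\mathbf{H}^\prime)$. For $i\notin N(r)\cup N^\prime(r)$ both terms vanish. For $i\in N(r)\setminus N^\prime(r)$ the difference collapses to $\mathbf{H}_r$, and for $i\in N^\prime(r)\setminus N(r)$ it collapses to $-\mathbf{H}^\prime_r$, each of norm $1$ by row-normalization. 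The decisive case is $i\in N(r)\cap N^\prime(r)$: here $\mathbf{A}_i=\mathbf{A}^\prime_i$ annihilates the first term, but the second term equals $\mathbf{H}_r-\mathbf{H}^\prime_r$, whose norm can be as large as $2$. Maximizing the sum subject to $\max(|N(r)|,|N^\prime(r)|)\le D$ therefore favors making all $D$ neighbors shared, producing $4D$ rather than the $2D$ obtained from disjoint neighborhoods, so the sensitivity is $2\sqrt{D}$.

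From here I would apply the Gaussian mechanism with standard deviation $s$ to convert this sensitivity into the per-hop RDP cost $\frac{(2\sqrt{D})^2\alpha}{2s^2}=\frac{4D\alpha}{2s^2}$, and combine it with the assumed bound $\gamma_1$ on $\mathbf{W}^{(X)}$ through composition~\cite{mironov2017Renyi} to close the one-hop claim. Iterating the aggregation $L$ times, an induction that invokes composition at each hop accumulates $L$ copies of the per-hop term, giving the stated $\gamma_1+\frac{4DL\alpha}{2s^2}$ for $\mathbf{Z}_{\setminus r}$. For the $k$-neighbor case I would note that although at most $k$ entries of the $r$-th row and column of $\mathbf{A}$ change, the feature $\mathbf{X}_r$—and hence $\mathbf{H}_r$—is still replaced, so the number of common neighbors falling under case~(3) can again be as large as $D$; the extremal configuration is unchanged and the identical bound holds for every $k\ge 0$.

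The main obstacle, and the genuine point of departure from the DPDGC analysis in Theorem~\ref{thm:Z-ndp}, is exactly case~(3). In DPDGC the aggregated object $\mathbf{A}\mathbf{W}^{(A)}$ has identical rows over common neighbors across the two datasets, so those rows contribute $0$ and only the at most $2D$ boundary rows matter; here, because $\mathbf{H}$ carries the (changed) node feature in row $r$, every shared neighbor of $r$ still registers a contribution of up to $2$. It is precisely this coupling of the topology $\mathbf{A}$ with the feature-dependent embedding $\mathbf{H}$ that inflates the constant from $2D$ to $4D$ and, more importantly, prevents the bound from shrinking with $k$. To certify tightness I would exhibit the extremal instance $\mathbf{H}_r=-\mathbf{H}^\prime_r$ with $\max(|N(r)|,|N^\prime(r)|)=D$ and all neighbors shared, which attains the sensitivity $2\sqrt{D}$.
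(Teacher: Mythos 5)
Your proposal is correct and follows essentially the same route as the paper's proof: the identical row-wise decomposition and three-case analysis of $N(r)$ versus $N'(r)$, the observation that shared neighbors (case (3)) contribute $\|\mathbf{H}_r-\mathbf{H}'_r\|\le 2$ so the worst case is $|N(r)\cap N'(r)|=D$ with sensitivity $2\sqrt{D}$, the same induction over $L$ hops, and the same reasoning for why the bound is unchanged for every $k\ge 0$. The only place the paper is more careful is the composition step across hops: because $\mathbf{H}^{(l)}_r$ is not private yet is correlated with $\mathbf{H}^{(l)}_{\setminus r}$ through the aggregation, the paper invokes its generalized adaptive composition theorem (Theorem~\ref{thm:gdp-composition}) rather than the standard sequential composition of~\cite{mironov2017Renyi} that you cite, which is not directly applicable in this dependent setting.
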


\begin{proof}
    We start by showing that for any $\mathcal{D}\stackrel{N}{\sim}\mathcal{D}^\prime$, $D_{\alpha}(\mathbf{H}^{(1)}_{\setminus r}||{\mathbf{H}_{\setminus r}^{(1)}}^\prime)\leq \gamma_1 + \frac{4D\alpha}{2s^2}$. By examining $\|\left[\mathbf{A}\mathbf{H}\right]_i-\left[\mathbf{A}^\prime\mathbf{H}^\prime\right]_i\|$ for all $i\in [n]\setminus \{r\}$, we find that there are three cases that contribute nonzero norms. Let $N(r)$ and $N^\prime(r)$ denotes the neighbor node set of $r$ with respect to $\mathbf{A}$ and $\mathbf{A}^\prime,$ respectively. The three cases are: (1) $i\in N(r)\setminus N^\prime(r)$, (2) $i\in N^\prime(r)\setminus N(r)$, and (3) $i\in N(r)\cap N^\prime(r)$. For cases (1) and (2), $\|\left[\mathbf{A}\mathbf{H}\right]_i-\left[\mathbf{A}^\prime\mathbf{H}^\prime\right]_i\| \leq 1$ due to $\mathbf{H}^{(0)}$ and $\mathbf{H}^{(0)^\prime}$ being row-normalized. For case (c), we have $\|\left[\mathbf{A}\mathbf{H}\right]_i-\left[\mathbf{A}^\prime\mathbf{H}^\prime\right]_i\| = \|\mathbf{H}_r-\mathbf{H}_r^\prime\| \leq 2$. Since the out-degree is upper bounded by $D$, we know that $\max(|N(r)|,|N^\prime(r)|)\leq D$. The worst-case arises for $|N(r)\cap N^\prime(r)| = D$. Hence we have the following worst-case upper bound
    \begin{align*}
        & \|\left[\mathbf{A}\mathbf{H}\right]_{\setminus r}-\left[\mathbf{A}^\prime\mathbf{H}^\prime\right]_{\setminus r}\|_F^2 = \sum_{i \in [n]\setminus\{r\}} \|\left[\mathbf{A}\mathbf{H}\right]_i-\left[\mathbf{A}^\prime\mathbf{H}^\prime\right]_i\|^2 \leq 4D^2.
    \end{align*}
    This leads to the term $\frac{4D\alpha}{2s^2}$ in the divergence bound. By applying Theorem~\ref{thm:gdp-composition} and the assumption on $\mathbf{W}^{(X)}$, we can show that $D_{\alpha}(\mathbf{H}^{(1)}_{\setminus r}||{\mathbf{H}_{\setminus r}^{(1)}}^\prime)\leq \gamma_1 + \frac{4D\alpha}{2s^2}$. 
    
    For the $L$-hop result $\mathbf{Z}_{\setminus r}$, one can apply Theorem~\ref{thm:gdp-composition} and induction. The base case $L=1$ is already established above. For the induction step, we have $D_{\alpha}(\mathbf{H}^{(L-1)}_{\setminus r}||{\mathbf{H}_{\setminus r}^{(L-1)}}^\prime)\leq \gamma_1 + \frac{4D(L-1)\alpha}{2s^2}$. By applying Theorem~\ref{thm:gdp-composition} and repeating the previous analysis, we have 
    \begin{align}
        & D_{\alpha}(\mathbf{H}^{(L)}_{\setminus r}||{\mathbf{H}_{\setminus r}^{(L)}}^\prime)\leq \gamma_1 + \frac{4D(L-1)\alpha}{2s^2} + \frac{4D\alpha}{2s^2} = \gamma_1 + \frac{4DL\alpha}{2s^2}.
    \end{align}
    Here, the key idea is that although $\mathbf{H}^{(L-1)}_r$ is not private, it is still row-normalized so that the above sensitivity analysis still applies. For the case of $\mathcal{D}\stackrel{N_k}{\sim}\mathcal{D}^\prime$, the worst case scenario still arises for $|N(r)\cap N^\prime(r)| = D,$ for all $k\geq 0$. This is in fact the case for $N_0$ (i.e., $\mathbf{A} = \mathbf{A}^\prime$). Thus, the same result holds for the case $\mathcal{D}\stackrel{N_k}{\sim}\mathcal{D}^\prime,$ for all $k\geq 0$. This completes the proof. 
\end{proof}

\section{RDP to DP conversion}\label{apx:rdp_to_approx_dp}
For a given $(\alpha,\gamma)$ R\'enyi DP guarantee, the following conversion lemma \cite{asoodeh2020better, canonne2020discrete, bun2016concentrated} allows us to convert it back to a  $(\varepsilon, \delta)$-DP guarantee: 
\begin{lemma}\label{lemma:rdp_to_approx_dp}
    If $\mathcal{A}$ satisfies $\left( \alpha, \gamma(\alpha) \right)$-RDP for all $\alpha > 1$, then, for any $\delta > 0$, $\mathcal{A}$ satisfies $\left( \varepsilon_{\mathsf{DP}}(\delta), \delta \right)$-DP, where
$$ \varepsilon_{\mathsf{DP}}(\delta) = \inf_{\alpha > 1} \gamma(\alpha) +\frac{\log\left( 1/\alpha\delta \right)}{\alpha-1}+\log(1-1/\alpha). $$
\end{lemma}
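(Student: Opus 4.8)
The plan is to follow the standard route that turns a Rényi bound into an approximate-DP guarantee via the hockey-stick divergence, but carried out with the \emph{sharp} pointwise exponential bound that is responsible for the precise constants $\frac{\log(1/\alpha)}{\alpha-1}$ and $\log(1-1/\alpha)$ in the claimed $\varepsilon_{\mathsf{DP}}(\delta)$. Fix an adjacent pair and write $P=\mathcal{A}(\mathcal{D})$, $Q=\mathcal{A}(\mathcal{D}')$ with densities $p,q$. First I would recall that $(\varepsilon,\delta)$-DP for this pair is equivalent to $\sup_{\mathcal{S}}\bigl(P(\mathcal{S})-e^\varepsilon Q(\mathcal{S})\bigr)\le\delta$, that the supremum is attained on $\{p>e^\varepsilon q\}$, and that the resulting quantity is the hockey-stick divergence $\int (p-e^\varepsilon q)_+ = \mathbb{E}_{x\sim P}\bigl[(1-e^{\varepsilon-L(x)})_+\bigr]$, where $L(x)=\log\tfrac{p(x)}{q(x)}$ is the privacy-loss variable and $(\cdot)_+=\max(\cdot,0)$. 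This reduces the lemma to bounding a single expectation.

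Second, I would convert the RDP hypothesis into a moment bound on $L$ under $P$. Using $D_\alpha(P\|Q)=\tfrac{1}{\alpha-1}\log\mathbb{E}_{x\sim Q}[(p/q)^\alpha]=\tfrac{1}{\alpha-1}\log\mathbb{E}_{x\sim P}[e^{(\alpha-1)L(x)}]$, the assumption $D_\alpha(P\|Q)\le\gamma(\alpha)$ becomes the clean inequality $\mathbb{E}_{x\sim P}[e^{(\alpha-1)L(x)}]\le e^{(\alpha-1)\gamma(\alpha)}$, valid for every $\alpha>1$.

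Third --- and this is the crux --- I would seek the smallest constant $c=c(\alpha,\varepsilon)$ for which $(1-e^{\varepsilon-\ell})_+\le c\,e^{(\alpha-1)\ell}$ holds for all real $\ell$, so that taking expectations against the moment bound above is legitimate. The inequality is vacuous for $\ell\le\varepsilon$, so it suffices to maximize $g(\ell)=(1-e^{\varepsilon-\ell})e^{-(\alpha-1)\ell}$ over $\ell>\varepsilon$. Solving $g'(\ell)=0$ gives the maximizer $\ell^\star=\varepsilon+\log\tfrac{\alpha}{\alpha-1}$, at which $1-e^{\varepsilon-\ell^\star}=\tfrac{1}{\alpha}$, and substituting back yields $c=\tfrac{(\alpha-1)^{\alpha-1}}{\alpha^\alpha}\,e^{-(\alpha-1)\varepsilon}$. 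I expect this calculus step to be the main obstacle: it is where the nontrivial constants originate. A cruder argument would bound $(1-e^{\varepsilon-\ell})_+\le\mathbf{1}[\ell>\varepsilon]\le e^{(\alpha-1)(\ell-\varepsilon)}$ (plain Markov), which produces only $\varepsilon=\gamma(\alpha)+\tfrac{\log(1/\delta)}{\alpha-1}$ and discards exactly the $-\tfrac{\log\alpha}{\alpha-1}+\log(1-1/\alpha)$ savings claimed here.

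Finally, I would assemble the pieces: taking expectations under $P$ gives $\int(p-e^\varepsilon q)_+\le c\,\mathbb{E}_P[e^{(\alpha-1)L}]\le \tfrac{(\alpha-1)^{\alpha-1}}{\alpha^\alpha}\,e^{(\alpha-1)(\gamma(\alpha)-\varepsilon)}$. Requiring this to be at most $\delta$ and solving for $\varepsilon$, together with the identity $\tfrac{1}{\alpha}\bigl(\tfrac{\alpha-1}{\alpha}\bigr)^{\alpha-1}=\tfrac{(\alpha-1)^{\alpha-1}}{\alpha^\alpha}$, yields $\varepsilon\ge\gamma(\alpha)+\tfrac{\log(1/(\alpha\delta))}{\alpha-1}+\log(1-1/\alpha)$. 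Since $\alpha>1$ is arbitrary and the hypothesis holds for all $\alpha$ simultaneously, taking the infimum over $\alpha$ delivers the stated $\varepsilon_{\mathsf{DP}}(\delta)$. Everything after the pointwise-bound optimization is routine bookkeeping.
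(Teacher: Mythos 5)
Your proof is correct: the reduction to the hockey-stick divergence, the moment identity $\mathbb{E}_{P}[e^{(\alpha-1)L}]\leq e^{(\alpha-1)\gamma(\alpha)}$, and the optimization of $c(\alpha,\varepsilon)$ with maximizer $\ell^\star=\varepsilon+\log\frac{\alpha}{\alpha-1}$ all check out and assemble to exactly the stated $\varepsilon_{\mathsf{DP}}(\delta)$. The paper itself gives no proof of this lemma --- it is imported verbatim from the cited references --- and your argument is precisely the standard derivation appearing there, so there is nothing to compare beyond noting that you have supplied the omitted proof.
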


\section{Edge GDP analysis for DPDGC and GAP}\label{apx:edge_GDP}

\textbf{DPDGC. }In this setting, ensuring that the embedding $\mathbf{Z}$ is DP is sufficient to guarantee the overall model being GDP. We can replace the remaining DP-MLP modules in Figure~\ref{fig:GDP_baselines} with standard MLP (i.e., training with a standard optimizer).
\begin{theorem}\label{thm:Z-edp}
    For any $\alpha>1$ and $\mathcal{D}\stackrel{E}{\sim}\mathcal{D}^\prime$, assume $D_{\alpha}((\mathbf{W}^{(A)},\mathbf{b})||(\mathbf{W}^{(A)},\mathbf{b})^\prime)\leq \gamma_1$. Then the embedding $\mathbf{Z}$ of DPDGC satisfies $D_{\alpha}(\mathbf{Z}||\mathbf{Z}^\prime)\leq \gamma_1 + \frac{\alpha}{2s^2}$.
\end{theorem}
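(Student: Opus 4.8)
The plan is to reuse the template of Theorems~\ref{thm:Z-ndp} and~\ref{thm:Z-nk}, exploiting that edge-level adjacency flips only a single entry of $\mathbf{A}$ so that the aggregation $\mathbf{A}\mathbf{W}^{(A)}$ has $\ell_2$-sensitivity exactly one. First I would condition on the weights $(\mathbf{W}^{(A)},\mathbf{b})$ and bound the Frobenius sensitivity of the map $\mathbf{A}\mapsto\mathbf{A}\mathbf{W}^{(A)}+\mathbf{b}$ under $\mathcal{D}\stackrel{E}{\sim}\mathcal{D}^\prime$; with the weights fixed the additive bias $\mathbf{b}$ cancels and only the term $\mathbf{A}\mathbf{W}^{(A)}$ matters. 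Recalling that edge-level adjacency is the special case of Definition~\ref{def:adjacent_data_nn} in which $\mathbf{X},\mathbf{Y}$ are untouched and a single entry $\mathbf{A}_{rj}$ of the $r^{th}$ row is replaced, only row $r$ of the product is perturbed, so
\[
\|\mathbf{A}\mathbf{W}^{(A)}-\mathbf{A}^\prime\mathbf{W}^{(A)}\|_F^2=\|(\mathbf{A}_r-\mathbf{A}^\prime_r)\mathbf{W}^{(A)}\|^2=|\mathbf{A}_{rj}-\mathbf{A}^\prime_{rj}|^2\,\|\mathbf{W}^{(A)}_j\|^2=1,
\]
where the last step uses $\mathbf{A}_{rj},\mathbf{A}^\prime_{rj}\in\{0,1\}$ together with the row-normalization $\|\mathbf{W}^{(A)}_j\|=1$. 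Thus the sensitivity is $1$, matching the $\sqrt{k}$ bound of Theorem~\ref{thm:Z-nk} at $k=1$; crucially, unlike the node case there is no need to excise row $r$, since a single edge flip perturbs only $\mathbf{Z}_r$ and does so with unit magnitude.

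With the sensitivity fixed, the second step invokes the Gaussian-mechanism R\'enyi guarantee: perturbing a query of $\ell_2$-sensitivity $\Delta=1$ by the i.i.d.\ noise matrix $\mathbf{N}$ with entrywise standard deviation $s$ yields a $\tfrac{\Delta^2\alpha}{2s^2}=\tfrac{\alpha}{2s^2}$-RDP bound, conditioned on the weights. Finally I would fold in the data dependence of the learned $(\mathbf{W}^{(A)},\mathbf{b})$ through the assumed divergence $\gamma_1$ and compose the two contributions: since $\mathbf{N}$ is drawn independently of the weight-training randomness, the standard R\'enyi composition theorem~\cite{mironov2017Renyi} applies directly, and alternatively the generalized adaptive composition of Theorem~\ref{thm:gdp-composition} handles the dependence between the released weights and $\mathbf{Z}$ rigorously. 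This gives $D_\alpha(\mathbf{Z}\|\mathbf{Z}^\prime)\le\gamma_1+\tfrac{\alpha}{2s^2}$, as stated, and because the full $\mathbf{Z}$ — not merely $\mathbf{Z}_{\setminus r}$ — is now private while $\mathbf{X},\mathbf{Y}$ are unchanged under edge adjacency, the downstream MLPs need no further privatization.

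I do not anticipate a genuine obstacle: the argument is strictly simpler than the node- and $k$-neighbor-level cases already proved. The only point requiring care is justifying that releasing the entire embedding $\mathbf{Z}$ (rather than $\mathbf{Z}_{\setminus r}$) is affordable. This is exactly where edge adjacency pays off — a single edge flip changes only row $r$ of $\mathbf{A}\mathbf{W}^{(A)}$ and does so with unit norm, so there is no analogue of the $\|[\mathbf{A}\mathbf{H}]_r-[\mathbf{A}^\prime\mathbf{H}^\prime]_r\|=2D$ blow-up flagged after Theorem~\ref{thm:sign-Z-ndp}, and protecting all of $\mathbf{Z}$ costs no more than protecting $\mathbf{Z}_{\setminus r}$ did in the node setting.
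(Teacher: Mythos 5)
Your proof is correct and follows essentially the same route as the paper's: bound the Frobenius sensitivity of $\mathbf{A}\mathbf{W}^{(A)}$ under a single-entry edge flip by the unit row-norm of $\mathbf{W}^{(A)}$, apply the Gaussian mechanism's R\'enyi bound $\frac{\alpha}{2s^2}$, and compose with the $\gamma_1$ guarantee on the weights. Your version is in fact slightly more explicit than the paper's (identifying the perturbation as exactly $\pm\mathbf{W}^{(A)}_j$ in row $r$ and noting why the full $\mathbf{Z}$, not just $\mathbf{Z}_{\setminus r}$, can be released), but the substance is identical.
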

\begin{proof}
    Since there is only one replaced entry of the adjacency matrix for any $\mathcal{D}\stackrel{E}{\sim}\mathcal{D}^\prime$, $\|(\mathbf{A}^\prime - \mathbf{A})\mathbf{W}^{(A)}\|_F \leq \max_{i\in [n]}\|\mathbf{W}^{(A)}_i\|_2 = 1$ (i.e., the maximum row-norm of $\mathbf{W}^{(A)}$). This implies that the sensitivity of $\mathbf{A}\mathbf{W}^{(A)}$ is $1$. Thus, applying the Gaussian noise  mechanism with standard deviation $s$ and standard DP composition rule~\cite{mironov2017Renyi} results in $D_{\alpha}(\mathbf{Z}||\mathbf{Z}^\prime)\leq \gamma_1 + \frac{\alpha}{2s^2}$.
\end{proof}
Regarding the assumption, it can be met by applying a standard DP-optimizer with group size $1$, where $\gamma_1$ depends on the noise multiplier of the DP-optimizer. This is due to the fact that at most one row of $\mathbf{A}$ is different for any $\mathcal{D}\stackrel{E}{\sim}\mathcal{D}^\prime$. By further applying Theorem~\ref{thm:gdp-composition}, we can establish the edge GDP guarantees for DGDGC (Corollary~\ref{cor:linkx-edp}).

\textbf{GAP. }In this setting, all DP-MLP modules in Figure~\ref{fig:GDP_baselines} can be replaced with a standard MLP for GAP. Thus, we only need to ensure that the non-trainable module in GAP (i.e., PMA) is DP.

\begin{theorem}\label{thm:sign-Z-edp}
    For any $\alpha>1$ and $\mathcal{D}\stackrel{E}{\sim}\mathcal{D}^\prime$, the embedding $\mathbf{Z}$ of GAP satisfies $D_{\alpha}(\mathbf{Z}||\mathbf{Z}^\prime)\leq \frac{L \alpha}{2s^2}$.
\end{theorem}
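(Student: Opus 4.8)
The plan is to reduce the claim to an $L$-fold adaptive composition of Gaussian mechanisms, one per aggregation hop, each with sensitivity $1$. The first observation is that under edge-level adjacency the node features $\mathbf{X}$ (and labels $\mathbf{Y}$) are left unchanged, so the base embedding $\mathbf{H}^{(0)}$ -- which the edge-GDP design computes via a \emph{standard} (non-private) MLP applied to $\mathbf{X}$ alone -- satisfies $\mathbf{H}^{(0)}=\mathbf{H}^{(0)\prime}$ exactly, and hence contributes $0$ to the divergence. Consequently there is no $\gamma_1$ term, in contrast to Theorem~\ref{thm:sign-Z-ndp}, and the whole budget comes from the $L$ noisy PMA steps producing $\mathbf{H}^{(1)},\dots,\mathbf{H}^{(L)}$. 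Since $\mathbf{Z}$ is the concatenation of $\mathbf{H}^{(0)},\dots,\mathbf{H}^{(L)}$, it suffices to bound the divergence accumulated across these $L$ noisy applications of \eqref{eq:dp-sign-mod}.

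Second, I would carry out the per-hop sensitivity analysis. Fixing a hop and treating its input $\mathbf{H}^{(l)}$ as given, the pre-noise quantity is $\mathbf{A}\mathbf{H}^{(l)}$, so the relevant sensitivity is $\|(\mathbf{A}-\mathbf{A}^\prime)\mathbf{H}^{(l)}\|_F$. Under edge adjacency $\mathbf{A}-\mathbf{A}^\prime$ has a single nonzero entry (value $\pm 1$), so $(\mathbf{A}-\mathbf{A}^\prime)\mathbf{H}^{(l)}$ has exactly one nonzero row, equal to $\pm\mathbf{H}^{(l)}_c$ where $c$ is the column of the changed entry. Because every $\mathbf{H}^{(l)}$ is row-normalized (both $\mathbf{H}^{(0)}$ and each PMA output are row-normalized), $\|\mathbf{H}^{(l)}_c\|\le 1$, whence $\|(\mathbf{A}-\mathbf{A}^\prime)\mathbf{H}^{(l)}\|_F\le 1$ uniformly over all admissible row-normalized inputs. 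This is exactly the single-edge sensitivity computation already used for DPDGC in Theorem~\ref{thm:Z-edp}. Adding i.i.d.\ Gaussian noise of standard deviation $s$ and post-processing by row-normalization then yields a per-hop guarantee of $\tfrac{\alpha}{2s^2}$-RDP.

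Finally I would compose. The $L$ hops use independent noise matrices, and -- crucially, unlike the node/$k$-neighbor setting -- here the \emph{entire} embedding $\mathbf{H}^{(l)}$ is released and DP, so no coordinate needs to be excluded and fed forward; the standard adaptive RDP composition theorem~\cite{mironov2017Renyi} applies directly and gives $D_\alpha(\mathbf{Z}\|\mathbf{Z}^\prime)\le L\cdot\tfrac{\alpha}{2s^2}=\tfrac{L\alpha}{2s^2}$. The point requiring care -- and the main obstacle -- is that across the two chains the inputs $\mathbf{H}^{(l)}$ and $\mathbf{H}^{(l)\prime}$ to a given hop genuinely differ, since earlier noise and the changed edge have already propagated. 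Adaptive composition resolves this precisely because the per-hop sensitivity bound of $1$ holds for \emph{every} fixed row-normalized input, so the conditional divergence of hop $l{+}1$ given the previous outputs is at most $\tfrac{\alpha}{2s^2}$ regardless of how $\mathbf{H}^{(l)}$ was realized, while the divergence already present in $\mathbf{H}^{(l)}$ is charged to hops $1,\dots,l$. One could equivalently invoke the generalized composition theorem (Theorem~\ref{thm:gdp-composition}), but since every released coordinate is DP and the noise across hops is independent, the classical statement suffices.
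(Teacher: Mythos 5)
Your proposal is correct and follows essentially the same route as the paper's proof: the single changed entry of $\mathbf{A}$ gives per-hop sensitivity $\max_i\|\mathbf{H}^{(l)}_i\|\le 1$ by row-normalization, the Gaussian mechanism yields $\tfrac{\alpha}{2s^2}$ per hop, and composition over the $L$ hops gives the claimed bound. Your explicit justification of why adaptive composition handles the differing inputs $\mathbf{H}^{(l)}$ versus $\mathbf{H}^{(l)\prime}$, and why no $\gamma_1$ term appears, only spells out details the paper leaves implicit.
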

\begin{proof}
    The analysis is similar to that of Theorem~\ref{thm:Z-edp}, except that we have $\mathbf{A}\mathbf{H}$ instead of $\mathbf{A}\mathbf{W}_A$. Following the same argument, we know that $D_{\alpha}(\mathbf{H}^{(1)}||{\mathbf{H}^{(1)}}^\prime)\leq \frac{\alpha}{2s^2}$. Then, by the standard DP composition theorem, we arrive at the claimed result.
\end{proof}

\section{Formal GDP guarantees for the complete DPDGC and GAP models}\label{apx:formal_GDP}
\begin{corollary}\label{cor:linkx-edp}
    For any $\alpha>1$, the DPDGC model is edge $(\alpha,\gamma_1 + \frac{\alpha}{2s^2})$-GDP.
\end{corollary}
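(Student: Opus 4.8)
The plan is to obtain the corollary from Theorem~\ref{thm:Z-edp} by combining it with the post-processing/composition machinery of Theorem~\ref{thm:gdp-composition}. First I would exploit the structural simplification of edge-level adjacency: a pair $\mathcal{D}\stackrel{E}{\sim}\mathcal{D}^\prime$ leaves $\mathbf{X}$ and $\mathbf{Y}$ unchanged and alters only a single entry of $\mathbf{A}$, so the feature and label matrices are effectively public and the only sensitive input entering the pipeline is $\mathbf{A}$, which is read solely by the DP-Emb module. Fixing any prediction node $v$ with $r\neq v$, the model output $\mathcal{M}(v;\mathcal{D})=(\widehat{\mathbf{Y}}_v,\mathbf{W})$ — where $\mathbf{W}$ bundles the DP-Emb weights $(\mathbf{W}^{(A)},\mathbf{b})$ with the downstream standard-MLP weights — is thus a randomized function of $\mathbf{A}$, of $(\mathbf{W}^{(A)},\mathbf{b})$, of the Gaussian noise $\mathbf{N}$, and of the public $(\mathbf{X},\mathbf{Y})$.

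Second, I would verify the hypothesis of Theorem~\ref{thm:Z-edp}: training DP-Emb with a standard DP-optimizer at group size $1$ — legitimate because an edge replacement changes at most one row of $\mathbf{A}$ — gives $D_\alpha((\mathbf{W}^{(A)},\mathbf{b})\|(\mathbf{W}^{(A)},\mathbf{b})^\prime)\leq\gamma_1$, whence Theorem~\ref{thm:Z-edp} yields $D_\alpha(\mathbf{Z}\|\mathbf{Z}^\prime)\leq\gamma_1+\tfrac{\alpha}{2s^2}$. To lift this bound on $\mathbf{Z}$ to the full output, I would apply Theorem~\ref{thm:gdp-composition} with $\mathcal{A}_1(\mathcal{D})=(\mathbf{W}^{(A)},\mathbf{b})$ (which is $(\alpha,\gamma_1)$-RDP), $\mathcal{A}_2(\mathcal{D})=\mathbf{A}$, and $\mathcal{B}$ the map ``form $\mathbf{Z}=\mathbf{A}\mathbf{W}^{(A)}+\mathbf{b}+\mathbf{N}$, then run the standard MLP and emit $(\widehat{\mathbf{Y}}_v,\mathbf{W}_{\mathrm{MLP}})$.'' For any fixed $(\mathbf{W}^{(A)},\mathbf{b})$ the edge-level sensitivity $\|(\mathbf{A}-\mathbf{A}^\prime)\mathbf{W}^{(A)}\|_F\leq 1$ together with the Gaussian mechanism and the subsequent (independent) post-processing make $\mathcal{B}$ satisfy $(\alpha,\tfrac{\alpha}{2s^2})$-RDP in its $\mathbf{A}$-argument. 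The composition theorem then certifies that the joint release $\bigl((\mathbf{W}^{(A)},\mathbf{b}),(\widehat{\mathbf{Y}}_v,\mathbf{W}_{\mathrm{MLP}})\bigr)=\mathcal{M}(v;\mathcal{D})$ is $(\alpha,\gamma_1+\tfrac{\alpha}{2s^2})$-RDP. Since this holds uniformly over all $v\in[n]\setminus[m]$ and all edge-adjacent pairs with $r\neq v$, Definition~\ref{def:GDP_TS} delivers the claimed edge $(\alpha,\gamma_1+\tfrac{\alpha}{2s^2})$-GDP guarantee.

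The step I expect to require the most care is the correlated joint release of $(\mathbf{W}^{(A)},\mathbf{b})$ and the embedding $\mathbf{Z}$: because $\mathbf{Z}$ is computed from the very weights that are also output, a naive sequential-composition argument is not directly justified, and this dependence is exactly what Theorem~\ref{thm:gdp-composition} is built to absorb (alternatively, since $\mathbf{N}$ is independent of the DP-Emb training randomness, one could instead group the noise with the downstream mechanism and fall back on classical RDP composition). A minor additional check is that the downstream MLP genuinely acts as post-processing — its training randomness independent of $\mathbf{N}$, reading only $\mathbf{Z}$ and the public $(\mathbf{X},\mathbf{Y})$ — and that exposing the prediction-node row $\mathbf{Z}_v$ is harmless here, since under edge adjacency Theorem~\ref{thm:Z-edp} makes the \emph{entire} $\mathbf{Z}$ DP rather than only $\mathbf{Z}_{\setminus r}$ as in the node-level analysis.
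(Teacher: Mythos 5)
Your proposal is correct and follows essentially the same route as the paper: invoke Theorem~\ref{thm:Z-edp} to privatize the entire embedding $\mathbf{Z}$ under edge adjacency (with the DP-Emb hypothesis met by a group-size-$1$ DP-optimizer), then treat the downstream standard-MLP modules as post-processing and conclude via composition. The paper's own proof is a one-line appeal to Theorem~\ref{thm:Z-edp} plus the DP composition theorem; your additional remarks about the $(\mathbf{W}^{(A)},\mathbf{b})$--$\mathbf{Z}$ dependence and the harmlessness of releasing $\mathbf{Z}_v$ are consistent with the paper's discussion in Appendix~\ref{apx:gdp-composition} and Appendix~\ref{apx:edge_GDP}.
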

\begin{proof}
    This is a direct consequence of applying Theorem~\ref{thm:Z-edp} (for guarantees pertaining to $\mathbf{Z}$) and the DP composition theorem.
\end{proof}

\begin{corollary}\label{cor:linkx-ndp}
    Assume that the graph has bounded out-degree $D$. For any $\alpha>1$, the DPDGC model is node $(\alpha,\gamma_1+\gamma_2+\frac{2D\alpha}{2s^2})$-GDP, where the remaining weights satisfy $(\alpha,\gamma_2)$-RDP.
\end{corollary}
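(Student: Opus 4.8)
The plan is to combine the embedding-level bound of Theorem~\ref{thm:Z-ndp} with the assumed downstream DP-SGD guarantee through the generalized adaptive composition theorem (Theorem~\ref{thm:gdp-composition}), and then close with post-processing. I would fix an arbitrary prediction target $v\in[n]\setminus[m]$ and an arbitrary node-adjacent pair $\mathcal{D}\stackrel{N}{\sim}\mathcal{D}^\prime$ whose replaced index satisfies $r\neq v$; by Definition~\ref{def:GDP_TS} it suffices to bound $D_\alpha(\mathcal{M}(v;\mathcal{D})\|\mathcal{M}(v;\mathcal{D}^\prime))$ for this pair. Recall that $\mathcal{M}(v;\mathcal{D})=(\widehat{\mathbf{Y}}_v,\mathbf{W})$, where $\mathbf{W}$ gathers the frozen DP-Emb weights $(\mathbf{W}^{(A)},\mathbf{b})$ together with the end-to-end trained weights, and where $\widehat{\mathbf{Y}}_v$ is computed from the row $\mathbf{Z}_v$, the feature $\mathbf{X}_v$, and the trained weights.

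The first step is a post-processing reduction. Since $r\neq v$, the row $\mathbf{Z}_v$ is contained in $\mathbf{Z}_{\setminus r}$, and $\mathbf{X}_v$ agrees on $\mathcal{D}$ and $\mathcal{D}^\prime$; hence the whole output $\mathcal{M}(v;\mathcal{D})$ is a deterministic function of the triple $\big((\mathbf{W}^{(A)},\mathbf{b}),\,\mathbf{Z}_{\setminus r},\,\text{trained weights}\big)$. By the data-processing inequality for R\'enyi divergence, it therefore suffices to bound the divergence of this triple.

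Next I would match the pieces to Theorem~\ref{thm:gdp-composition}. I would set $\mathcal{A}_1(\mathcal{D})=\big((\mathbf{W}^{(A)},\mathbf{b}),\mathbf{Z}_{\setminus r}\big)$: the proof of Theorem~\ref{thm:Z-ndp} already establishes that this joint release is $(\alpha,\gamma_1+\frac{2D\alpha}{2s^2})$-RDP, since it adaptively composes the $\gamma_1$-RDP release of $(\mathbf{W}^{(A)},\mathbf{b})$ with the Gaussian mechanism of sensitivity $\sqrt{2D}$ applied to $[\mathbf{A}\mathbf{W}^{(A)}]_{\setminus r}$. I would set $\mathcal{A}_2(\mathcal{D})$ to the record-$r$ inputs $(\mathbf{X}_r,\mathbf{Z}_r,\mathbf{Y}_r)$, which carry no privacy guarantee, and let $\mathcal{B}$ be the end-to-end DP-SGD training on $(\mathbf{X},\mathbf{Z},\mathbf{Y})$. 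Because moving from $\mathcal{D}$ to $\mathcal{D}^\prime$ alters only the single record $r$ while leaving $\mathbf{X}_{\setminus r},\mathbf{Y}_{\setminus r}$ unchanged, the hypothesis that the remaining weights are $(\alpha,\gamma_2)$-RDP is exactly the statement that $\mathcal{B}$ is $(\alpha,\gamma_2)$-RDP in its second input argument. Theorem~\ref{thm:gdp-composition} then gives that $\big(\mathcal{A}_1(\mathcal{D}),\mathcal{B}(\mathcal{A}(\mathcal{D}))\big)$ is $(\alpha,\gamma_1+\gamma_2+\frac{2D\alpha}{2s^2})$-RDP; combined with the post-processing reduction above, this bounds $\mathcal{M}(v;\mathcal{D})$, and since $v$ and the adjacent pair were arbitrary (subject to $r\neq v$), Definition~\ref{def:GDP_TS} yields node $(\alpha,\gamma_1+\gamma_2+\frac{2D\alpha}{2s^2})$-GDP.

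The main obstacle is precisely the dependency issue flagged in Section~\ref{sec:gdp}: the downstream weights are trained on the \emph{full} embedding $\mathbf{Z}$, yet only $\mathbf{Z}_{\setminus r}$ is private, and the non-private row $\mathbf{Z}_r$ is correlated with $\mathbf{Z}_{\setminus r}$ through the shared weights $(\mathbf{W}^{(A)},\mathbf{b})$. This correlation is what prevents a naive sequential RDP composition and is exactly why Theorem~\ref{thm:gdp-composition} is invoked rather than the classical composition theorem. The only hypothesis of that theorem needing verification is the independence of the randomness used in $\mathcal{A}$ and $\mathcal{B}$, which holds because the Gaussian noise $\mathbf{N}$ and the DP-SGD noise of the downstream optimizer are drawn independently; equivalently, one could group $(\mathcal{A}_2(\mathcal{D}),\mathcal{B}(\mathcal{D}))$ and fall back on classical composition, as noted in the remark following Theorem~\ref{thm:gdp-composition}.
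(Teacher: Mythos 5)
Your proposal is correct and follows essentially the same route as the paper's proof, which cites exactly the same three ingredients: Theorem~\ref{thm:Z-ndp} for $\mathbf{Z}_{\setminus r}$, the standard DP-optimizer guarantee for the remaining weights, and the generalized adaptive composition theorem (Theorem~\ref{thm:gdp-composition}) to handle the non-private row $\mathbf{Z}_r$. You merely spell out details the paper leaves implicit (the post-processing reduction at inference using $r\neq v$, and the precise assignment of $\mathcal{A}_1$, $\mathcal{A}_2$, and $\mathcal{B}$, which matches the paper's own instantiation in Appendix~\ref{apx:gdp-composition}).
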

\begin{proof}
    This is a direct consequence of  Theorem~\ref{thm:Z-ndp} (for guarantees pertaining to $\mathbf{Z}_{\setminus r}$), the standard DP-optimizer result~\cite{abadi2016deep} and Theorem~\ref{thm:gdp-composition} (the generalized adaptive composition theorem).
\end{proof}

\begin{corollary}\label{cor:linkx-nk}
    For any $\alpha>1$, the DPDGC model is $k$-neighbor $(\alpha,\gamma_1+\gamma_2+\frac{k \alpha}{2s^2})$-GDP, with the remaining weights are $(\alpha,\gamma_2)$-RDP.
\end{corollary}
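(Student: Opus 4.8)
The plan is to obtain Corollary~\ref{cor:linkx-nk} as an almost verbatim transcription of the node-GDP argument in Corollary~\ref{cor:linkx-ndp}, substituting the sharper $k$-dependent embedding bound for the degree-dependent one. Three ingredients are needed: (i) a partial-DP guarantee on the cached embedding $\mathbf{Z}_{\setminus r}$, supplied directly by Theorem~\ref{thm:Z-nk}; (ii) an RDP guarantee on the weights of the downstream end-to-end trained modules; and (iii) a composition step that stitches these together while honoring the GDP requirement $r\neq v$.

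First I would invoke Theorem~\ref{thm:Z-nk} to certify that, for any $\mathcal{D}\stackrel{N_k}{\sim}\mathcal{D}^\prime$ with replaced index $r$, one has $D_\alpha(\mathbf{Z}_{\setminus r}\Vert \mathbf{Z}^\prime_{\setminus r}) \leq \gamma_1 + \frac{k\alpha}{2s^2}$. The prerequisite there—that $(\mathbf{W}^{(A)},\mathbf{b})$ is $(\alpha,\gamma_1)$-RDP—is discharged by pretraining the DP-Emb module with a DP-optimizer at group size $k+1$, since under $k$-neighbor adjacency replacing one node alters at most $k$ of the rows of $\mathbf{A}$ touching $r$; note that, unlike the node case, no bounded out-degree assumption is required here. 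Next I would identify the remaining modules (the DP-MLP acting on $(\mathbf{X},\mathbf{Z})$) as a mechanism that is $(\alpha,\gamma_2)$-RDP in its data argument via the standard DP-SGD accounting of~\cite{abadi2016deep}.

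The crucial step is then to compose $\mathbf{Z}_{\setminus r}$ with these trained weights. Because the full embedding $\mathbf{Z}=(\mathbf{Z}_{\setminus r},\mathbf{Z}_r)$ is fed downstream and $\mathbf{Z}_r$ is \emph{not} protected, ordinary sequential RDP composition does not apply. Instead I would apply the generalized adaptive composition theorem (Theorem~\ref{thm:gdp-composition}) with $\mathcal{A}_1(\mathcal{D})=\mathbf{Z}_{\setminus r}$, $\mathcal{A}_2(\mathcal{D})=\mathbf{Z}_r$, and $\mathcal{B}$ the DP-MLP weights, obtaining a joint $(\alpha,\gamma_1+\gamma_2+\frac{k\alpha}{2s^2})$-RDP guarantee on $(\mathbf{Z}_{\setminus r},\mathbf{W})$. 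Finally, since the GDP definition (Definition~\ref{def:GDP_TS}) only quantifies over adjacent pairs with $r\neq v$, the prediction $\widehat{\mathbf{Y}}_v$ is a post-processing function of $\mathbf{W}$ and of row $v$ of $\mathbf{Z}$, which lies in $\mathbf{Z}_{\setminus r}$; moreover $\mathbf{X}_v=\mathbf{X}^\prime_v$ as $v\neq r$. Hence the divergence bound transfers verbatim to the complete mechanism $\mathcal{M}(v;\mathcal{D})$ without additional privacy cost.

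The main obstacle is not any calculation but precisely the dependence between $\mathbf{Z}_r$ and $\mathbf{Z}_{\setminus r}$ that arises from feeding the entire $\mathbf{Z}$ to the classifier; this is exactly the scenario Theorem~\ref{thm:gdp-composition} was designed to handle. I would note the shortcut that, because the Gaussian noise added to the two blocks of $\mathbf{Z}$ is independent, one may equivalently group $(\mathbf{Z}_r,\mathbf{W})$ and fall back on the classical RDP composition theorem~\cite{mironov2017Renyi} to reach the same constant, so the result is immediate once the $k$-dependent sensitivity of Theorem~\ref{thm:Z-nk} is in hand.
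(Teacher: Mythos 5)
Your proposal is correct and follows essentially the same route as the paper: Theorem~\ref{thm:Z-nk} for the partial guarantee on $\mathbf{Z}_{\setminus r}$ (with the DP-Emb assumption discharged by a DP-optimizer at group size $k+1$), the standard DP-SGD accounting for the remaining weights, and the generalized adaptive composition theorem with $\mathcal{A}_1(\mathcal{D})=\mathbf{Z}_{\setminus r}$, $\mathcal{A}_2(\mathcal{D})=\mathbf{Z}_r$, and $\mathcal{B}$ the downstream weights. Even your closing remark---that independence of the Gaussian noise on the two blocks lets one group $(\mathbf{Z}_r,\mathbf{W})$ and fall back on classical R\'enyi composition---mirrors the paper's own observation in its appendix.
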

\begin{proof}
    This follows by applying Theorem~\ref{thm:Z-nk} (with guarantees for $\mathbf{Z}_{\setminus r}$), the standard DP-optimizer result~\cite{abadi2016deep}, and Theorem~\ref{thm:gdp-composition} (generalized adaptive composition theorem).
\end{proof}

\begin{corollary}\label{cor:sign-edp}
    For any $\alpha>1$, the GAP model is edge $(\alpha,\frac{L\alpha}{2s^2})$-GDP.
\end{corollary}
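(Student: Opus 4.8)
The plan is to derive Corollary~\ref{cor:sign-edp} directly from Theorem~\ref{thm:sign-Z-edp} together with the post-processing (data-processing) property of the R\'enyi divergence. The key observation, noted in the discussion preceding Theorem~\ref{thm:sign-Z-edp}, is that in the edge GDP setting every DP-MLP in GAP may be replaced by an ordinary (non-private) MLP, so the \emph{only} randomized, privacy-bearing component of the pipeline is the PMA module that produces the cached embedding $\mathbf{Z}$.

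First I would record the structural facts about edge adjacency: for any $\mathcal{D}\stackrel{E}{\sim}\mathcal{D}^\prime$ the two datasets share the same feature matrix $\mathbf{X}$ and label matrix $\mathbf{Y}$ and differ only in a single entry of $\mathbf{A}$. Since the encoder in GAP is trained on $\mathbf{X}$ alone and does not depend on $\mathbf{A}$, its output $\mathbf{H}^{(0)}$ is \emph{identical} across $\mathcal{D}$ and $\mathcal{D}^\prime$; hence the entire dependence of the model on the differing edge is funneled through $\mathbf{Z}$. Unlike the node- and $k$-neighbor-level cases, here one need not excise the $r^{th}$ row: Theorem~\ref{thm:sign-Z-edp} already certifies that the \emph{full} embedding satisfies $D_\alpha(\mathbf{Z}\,\|\,\mathbf{Z}^\prime)\le \frac{L\alpha}{2s^2}$.

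Next I would write the GAP output $(\widehat{\mathbf{Y}}_v,\mathbf{W})=\mathcal{M}(v;\mathcal{D})$ as a randomized map applied to $(\mathbf{Z},\mathbf{X},\mathbf{Y})$, where the downstream randomness (training of the standard classifier MLP and inference) is independent of the PMA noise and where $(\mathbf{X},\mathbf{Y})$ are held fixed. The conclusion then follows because R\'enyi divergence cannot increase under such post-processing: applying the data-processing inequality (equivalently, the degenerate case of the composition argument used for Corollary~\ref{cor:linkx-edp}) yields $D_\alpha(\mathcal{M}(v;\mathcal{D})\,\|\,\mathcal{M}(v;\mathcal{D}^\prime))\le \frac{L\alpha}{2s^2}$ for every $v\in[n]\setminus[m]$, which is exactly the edge $(\alpha,\frac{L\alpha}{2s^2})$-GDP guarantee.

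I do not anticipate a serious obstacle, since all the substantive work is carried out in Theorem~\ref{thm:sign-Z-edp} and the corollary is essentially a post-processing wrapper. The one point requiring care is to explain why \emph{no} additional $\gamma_1$-type term appears, in contrast to the DPDGC edge guarantee of Corollary~\ref{cor:linkx-edp}: there the trainable DP-Emb module acts on $\mathbf{A}$, which changes under edge adjacency and so must be trained privately, whereas in GAP the only trainable modules act on $\mathbf{X}$ (unchanged under edge adjacency) and may therefore be trained non-privately at zero privacy cost. I would also make explicit that the independence of the downstream training randomness from the PMA Gaussian noise is precisely what licenses the clean post-processing step.
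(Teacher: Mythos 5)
Your proposal is correct and matches the paper's proof in substance: the paper's argument for Corollary~\ref{cor:sign-edp} is exactly ``Theorem~\ref{thm:sign-Z-edp} plus composition,'' which in this edge-adjacency setting (where $\mathbf{X}$ and $\mathbf{Y}$ are unchanged and all downstream modules are trained non-privately) degenerates to the post-processing step you describe. Your version merely makes explicit why no $\gamma_1$-type term appears, which the paper leaves implicit.
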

\begin{proof}
    This follows by applying Theorem~\ref{thm:sign-Z-edp} (guarantees of $\mathbf{Z}$) and the DP composition theorem.
\end{proof}

\begin{corollary}\label{cor:sign-ndp}
    Assume that the underlying graph has bounded out-degree $D$. For any $\alpha>1$, the GAP model is node or $k$-neighbor $(\alpha,\gamma_1+\gamma_2+\frac{4DL\alpha }{2s^2})$-GDP, while the remaining weights are $(\alpha,\gamma_2)$-RDP.
\end{corollary}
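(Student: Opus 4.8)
The plan is to assemble the full-model guarantee from the embedding-level bound of Theorem~\ref{thm:sign-Z-ndp}, the standard DP-SGD guarantee for the downstream classifier, and the generalized adaptive composition of Theorem~\ref{thm:gdp-composition}, exactly paralleling the DPDGC statements (Corollaries~\ref{cor:linkx-ndp} and~\ref{cor:linkx-nk}). Recall that $\mathcal{M}(v;\mathcal{D})$ releases both the learned weights and the prediction $\widehat{\mathbf{Y}}_v$, so I must bound the R\'enyi divergence of the \emph{joint} output for every $v\in[n]\setminus[m]$ and every adjacent pair with $r\neq v$.

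First I would fix the privacy budget of the cached embedding. By Theorem~\ref{thm:sign-Z-ndp}, provided the pretrained encoder weights $\mathbf{W}^{(X)}$ are $(\alpha,\gamma_1)$-RDP --- which is guaranteed by training DP-MLP$_X$ with the DP-optimizer of~\cite{abadi2016deep}, where $\gamma_1$ is set by its noise multiplier --- the row-reduced embedding satisfies $D_\alpha(\mathbf{Z}_{\setminus r}||\mathbf{Z}^\prime_{\setminus r})\le \gamma_1 + \frac{4DL\alpha}{2s^2}$. Since each row $i\neq r$ of $\mathbf{H}^{(0)}$ depends only on $\mathbf{W}^{(X)}$ and the unchanged feature $\mathbf{X}_i$, this argument in fact controls the joint object $(\mathbf{W}^{(X)},\mathbf{Z}_{\setminus r})$ at the same budget, and I would make this joint reading explicit so that releasing $\mathbf{W}^{(X)}$ is already paid for. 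Crucially, this bound is identical under $N$- and $N_k$-adjacency and independent of $k$, so a single argument covers both claims of the corollary.

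Next I would account for the downstream classifier. DP-MLP$_f$ is trained on the cached $\mathbf{Z}$ via DP-SGD, so changing a single training record --- one row $\mathbf{Z}_r$ with its label, holding the remaining rows fixed --- produces weights $\mathbf{W}^{(f)}$ that are $(\alpha,\gamma_2)$-RDP; this is precisely the ``second-input RDP'' hypothesis of Theorem~\ref{thm:gdp-composition}. I would then invoke that theorem with $\mathcal{A}_1(\mathcal{D})=(\mathbf{W}^{(X)},\mathbf{Z}_{\setminus r})$ (budget $\varepsilon_1=\gamma_1+\frac{4DL\alpha}{2s^2}$), $\mathcal{A}_2(\mathcal{D})=\mathbf{Z}_r$ (not private), and $\mathcal{B}$ the DP-SGD training map producing $\mathbf{W}^{(f)}$ (budget $\varepsilon_2=\gamma_2$), using that the DP-SGD noise is drawn independently of the PMA noise. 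This yields an $(\alpha,\gamma_1+\gamma_2+\frac{4DL\alpha}{2s^2})$-RDP bound for the triple $(\mathbf{W}^{(X)},\mathbf{Z}_{\setminus r},\mathbf{W}^{(f)})$. Finally, since $r\neq v$, the prediction $\widehat{\mathbf{Y}}_v$ is computed from the row $\mathbf{Z}_v\subseteq\mathbf{Z}_{\setminus r}$ together with $\mathbf{W}^{(f)}$, so $\mathcal{M}(v;\mathcal{D})=(\widehat{\mathbf{Y}}_v,\mathbf{W}^{(X)},\mathbf{W}^{(f)})$ is post-processing of this triple; the data-processing inequality for R\'enyi divergence transfers the bound to $\mathcal{M}$, which is the claimed GDP guarantee.

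I expect the main obstacle to be the composition step rather than any sensitivity computation. The difficulty is that the non-private row $\mathbf{Z}_r$ fed into DP-MLP$_f$ is statistically \emph{dependent} on the private part $\mathbf{Z}_{\setminus r}$: for $L\ge 2$ the repeated aggregation $\mathbf{A}\mathbf{H}^{(l)}$ mixes the PMA noise across rows, so naive sequential RDP composition does not directly apply and one must appeal to the dependence-tolerant Theorem~\ref{thm:gdp-composition}. A secondary point requiring care is the bookkeeping ensuring that $\mathbf{W}^{(X)}$ is genuinely covered by the $\gamma_1$ term in the joint release (so it may be output at no extra cost), and that the group size used for each DP-SGD invocation matches the adjacency notion --- a single node (or $k$-neighbor) replacement perturbs $\mathbf{Z}_r$ as exactly one record for $\mathcal{B}$, while its effect on the remaining rows of $\mathbf{Z}$ is already absorbed into $\varepsilon_1$ through Theorem~\ref{thm:sign-Z-ndp}.
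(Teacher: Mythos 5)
Your proposal is correct and follows essentially the same route as the paper: Theorem~\ref{thm:sign-Z-ndp} for the embedding budget $\gamma_1+\frac{4DL\alpha}{2s^2}$, the standard DP-optimizer guarantee for the classifier weights ($\gamma_2$), and Theorem~\ref{thm:gdp-composition} with $\mathcal{A}_1=\mathbf{Z}_{\setminus r}$, $\mathcal{A}_2=\mathbf{Z}_r$, $\mathcal{B}$ the DP-SGD training map, followed by post-processing for the prediction at $v\neq r$. Your observation that the dependence between $\mathbf{Z}_r$ and $\mathbf{Z}_{\setminus r}$ for $L\ge 2$ is what forces the generalized (rather than standard sequential) composition matches the paper's own discussion in Appendix~\ref{apx:gdp-composition}.
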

\begin{proof}
    This follows by applying Theorem~\ref{thm:sign-Z-ndp} (with guarantees for $\mathbf{Z}_{\setminus r}$), the standard DP-optimizer result~\cite{abadi2016deep}, and Theorem~\ref{thm:gdp-composition} (generalized adaptive composition theorem).
\end{proof}

\section{Discussion on inductive settings}\label{apx:inductive}
While we focus on the main text was on  the transductive setting, we note that the inductive setting is actually significantly easier from the perspective of insuring privacy. There are two different settings for inductive graph learning. We call the first scenario the ``fully inductive setting''. What this means is that the information pertaining to test nodes is completely unavailable during the training phase and not subject to privacy protection~\cite{hamilton2017inductive}. One can think of this as a case where the training graph and the test graph are disjoint. In this case, model DP is sufficient to guarantee prediction DP, as one can only access training data information through the model weights. As a result, the extension of DP-SGD for GNNs~\cite{daigavane2021node} suffices to ensure rigorous user data privacy.

We refer to the second setting as the ``incrementally inductive setting''. What this means is that we will use all training and test node information during inference, despite the test nodes not being used during the training phase. In this case, DP of model weight does not ensure the DP of the model prediction. Hence, we will need our GDP guarantees to ensure user data privacy protection. Since we only need to ensure the privacy of training data, the RDP condition in Definition~\ref{def:GDP_TS} no longer needs to specify which test node is to be predicted. This is due to the fact that any test node $v$ that could be the target for prediction cannot be a training node. Thus, we will not replace it in the adjacent dataset. The remainder of the GDP analysis is similar to the previous one.

\section{Practical privacy meaning of $k$ in $k$-neigbor GDP}\label{apx:k_neighbor_explain}
The notion of $k$-neighbor GDP with $(\epsilon,\delta)=(0,0)$ implies that an adversary cannot simultaneously infer information about any $k$ in-edges and $k$ out-edges in $\mathbf{A}$. As discussed in the main text, even the case $k=1$ already provides a similar but stronger topology privacy protection than edge GDP. Although an adversary cannot infer the existence of any edges for the $k=1$ case, they might be confident that there is one edge among two node pairs even though they cannot be certain which one it is. For instance, even though an adversary cannot individually infer whether $\mathbf{A}_{12}$ and $\mathbf{A}_{13}$ are $1$ or $0$, they may be confident that the event $\{\mathbf{A}_{12}=1 \vee \mathbf{A}_{13}=1\}$ is true. The $k=2$ setting provides additional protection so that the adversary cannot simultaneously infer information about any $k=2$ edges. Nevertheless, the adversary may still be confident that the event $\{\mathbf{A}_{12}=1 \vee \mathbf{A}_{13}=1 \vee \mathbf{A}_{14}=1\}$ is true (in the worst case). A similar reasoning holds for general values of $k$. As a result, selecting an intermediate value of $k$ may reveal some edge information, but it allows for a trade-off between the potentially sensitive edge information and model utility. It is worth noting that in many practical scenarios, edge information is less sensitive than that of node features, making this notion of privacy particularly useful. For instance, if we are satisfied with the graph structure protection from edge-level GDP, we can simply use $k=1$ for additional protection of sensitive node features and labels, with similar graph topology privacy.

\section{Additional experimental details}\label{apx:exp_detail}
\textbf{Datasets. }We test seven benchmark datasets available from either the Pytorch Geometric library~\cite{fey2019fast} or prior works. These datasets include the social network \textbf{Facebook}~\cite{traud2012social,sajadmanesh2023gap}, citation networks \textbf{Cora} and \textbf{Pubmed}~\cite{sen2008collective,yang2016revisiting}, the Amazon co-purchase networks \textbf{Photo} and \textbf{Computers}~\cite{shchur2018pitfalls}, and Wikipedia networks \textbf{Squirrel} and \textbf{Chameleon}~\cite{rozemberczki2021multi}. Pertinent dataset statistics can be found in Table~\ref{tab:stats}. We also report the class insensitive edge homophily measure defined in~\cite{lim2021large}, with value in $[0,1]$; the value $1$ indicates the strongest possible homophily. Its definition is as follows:
\begin{align*}
    & \text{homophily} = \frac{1}{C-1}\sum_{c=1}^C\max(0,h_c - \frac{|\{i:\mathbf{Y}_{ic} = 1\}|}{n}),\\
    & h_c = \frac{|\{(v,w):(v,w)\in\mathcal{E} \wedge \mathbf{Y}_{vc}=\mathbf{Y}_{wc}=1\}|}{|\mathcal{E}|}.
\end{align*}
Note that $h_c$ represents the edge homophily ratio of nodes of class $c$, where the general edge homophily is defined in~\cite{zhu2020beyond}.

\textbf{The cSBM model. }We mainly follow the cSBM model as described  in~\cite{chien2021adaptive}. The cSBM model includes Gaussian random vector node features in addition to the classical SBM graph topology. For simplicity, we assume that there are $C=2$ equally sized communities with node labels $v_i \in \{{+1,-1\}}$. Each node $i$ is associate with a $f$ dimensional Gaussian vector $b_i = \sqrt{\frac{\mu}{n}}v_i u + \frac{Z_i}{\sqrt{f}},$ where $n$ is the number of nodes, $u\sim N(0,I/f),$ and $Z_i\in \mathbf{R}^f$ has independent standard normal entries. The graph in cSBM is described by an adjacency matrix $\mathbf{A}$ defined as
\begin{align*}
    \mathbf{P}\left(\mathbf{A}_{ij} = 1\right) =
    \begin{cases}
        \frac{d+\lambda\sqrt{d}}{n}, &\text{ if } v_iv_j>0\\
        \frac{d-\lambda\sqrt{d}}{n}, &\text{ otherwise }
    \end{cases}.
\end{align*}
Similar to the classical SBM, given the node labels, the edges are independent. The symbol $d$ stands for the average degree of the graph. Also, recall that $\mu$ and $\lambda$ control the strength of the information content conveyed by the node features and the graph structure, respectively.

One reason for using the cSBM to generate synthetic data is that the information-theoretic limit of the model has already been characterized in~\cite{deshpande2018contextual}. This result is summarized below.
\begin{theorem}[Informal main result from~\cite{deshpande2018contextual}]
 Assume that $n,f\rightarrow \infty$, $\frac{n}{f} \rightarrow \xi$ and $d \rightarrow \infty$. Then there exists an estimator $\hat{v}$ such that $\lim\inf_{n\rightarrow \infty} \frac{|\ip{\hat{v}}{v}|}{n}$ is bounded away from $0$ if and only if $\lambda^2 + \frac{\mu^2}{\xi} > 1$.
\end{theorem}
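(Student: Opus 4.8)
The plan is to recognize this as the weak-recovery (detection) threshold for a rank-one estimation problem built from two observation channels that are conditionally independent given the latent labels, and to exploit the dense regime $d\to\infty$ to convert the combinatorial graph channel into a Gaussian one. Conditioned on $v\in\{\pm1\}^n$, the adjacency matrix $\mathbf{A}$ and the feature matrix $B=(b_i)$ are independent: $\mathbf{A}$ is a spiked Erd\H{o}s--R\'enyi observation whose centered, variance-normalized version carries the rank-one signal $\lambda\,vv^\top/\sqrt{n}$, while $B$ is a spiked Gaussian observation whose signal, after integrating out the nuisance direction $u\sim N(0,I/f)$, is $\sqrt{\mu/n}\,v_i u$. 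First I would establish a Gaussian-equivalence step: because $d\to\infty$, each entry of the normalized adjacency is a vanishing-mean Bernoulli that, by a Lindeberg/entrywise central-limit argument, is asymptotically as informative (matching limiting mutual information and MMSE) as a Wigner matrix with the same rank-one spike. This reduces the problem to a standard two-channel rank-one spiked model whose effective per-channel signal-to-noise ratios are $\lambda^2$ (graph) and $\mu^2/\xi$ (features, with $\xi=n/f$ entering because the feature spike is $n\times f$ Wishart-type), and these ratios combine additively to $\lambda^2+\mu^2/\xi$.

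Granting the Gaussian reduction, the achievability direction ($\lambda^2+\mu^2/\xi>1$) follows by running a joint approximate message passing (AMP) iteration that alternately updates an estimate of $v$ from both $\mathbf{A}$ and $B$ and an estimate of $u$ from $B$, and tracking its rigorous state evolution. The recursion has an uninformative fixed point of zero overlap; linearizing around it, the relevant stability modulus equals $\lambda^2+\mu^2/\xi$. Hence above threshold the trivial fixed point is unstable, and seeded by a spectral initialization correlated with the leading eigenvector, the iterates converge to a fixed point with strictly positive limiting overlap, giving an estimator $\hat v$ with $\liminf_{n}|\ip{\hat v}{v}|/n>0$.

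For the impossibility direction ($\lambda^2+\mu^2/\xi<1$) I would establish mutual contiguity between the planted law and the null law in which $v$ is uninformative, which precludes any estimator from attaining positive overlap. The cleanest route is the second-moment method on the likelihood ratio: the expectation $\E[(dP_{\mathrm{planted}}/dP_{\mathrm{null}})^2]$ over two independent label copies factorizes across the two channels and, to leading order, contributes terms behaving like $-\tfrac12\log(1-\lambda^2 R^2)$ and $-\tfrac12\log(1-(\mu^2/\xi)R^2)$ in the inter-replica overlap $R$; boundedness of the dominant saddle holds precisely when $\lambda^2+\mu^2/\xi<1$. Equivalently, one may invoke the replica-symmetric free-energy formula for the Gaussian model (Lelarge--Miolane) and read off that the MMSE stays maximal below threshold.

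I expect the main obstacle to be the Gaussian-equivalence step for the graph channel: making rigorous that the sparse, small-mean Bernoulli spiked adjacency is asymptotically as informative as its Wigner surrogate, with error terms that vanish uniformly and do not perturb the sharp constant $1$ in the threshold. This needs a Lindeberg (or adaptive) interpolation controlling the change in free energy channel by channel, together with care that the $d\to\infty$ limit is taken so that degree concentration holds. A secondary difficulty is the coupling of the two overlap order parameters in the joint second-moment/saddle-point analysis, namely verifying that cross terms vanish and the two SNRs genuinely add; here the conditional independence of the graph and feature channels given $v$ is the essential ingredient.
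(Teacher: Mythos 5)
The paper does not prove this statement. It is quoted, explicitly labeled as an \emph{informal} restatement of the main result of the cited work on contextual stochastic block models, and is used only to justify the experimental choice of parameters on the arc $\lambda^2+\mu^2/\xi=1+\epsilon$ for the synthetic cSBM data. There is therefore no in-paper proof to compare your attempt against; the relevant benchmark is the proof in the original reference.

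Judged against that, your sketch correctly identifies the architecture of the known argument: conditional independence of the graph and feature channels given $v$; Gaussian equivalence of the centered, normalized adjacency in the $d\to\infty$ limit; an AMP iteration with rigorous state evolution whose uninformative fixed point destabilizes exactly at $\lambda^2+\mu^2/\xi=1$ for achievability; and a contiguity or replica-symmetric free-energy argument for the converse, with $\xi=n/f$ entering through the feature channel. This is essentially the strategy of the cited work, so as a plan it is sound. It is, however, a plan and not a proof: the three items you yourself flag as obstacles --- the Lindeberg/interpolation step reducing the small-mean Bernoulli channel to a Wigner surrogate without perturbing the constant $1$, the state-evolution analysis of the two coupled overlap parameters (for $v$ and for $u$), and the saddle-point control in the second-moment computation --- are where essentially all of the technical content lies, and none is carried out. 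One further caution on the converse: in spiked models a bare second-moment bound on the likelihood ratio can diverge strictly before the true threshold unless one conditions on typical events or truncates the overlap, so your claim that boundedness of the dominant saddle holds ``precisely when'' $\lambda^2+\mu^2/\xi<1$ needs that refinement (or the free-energy/MMSE route you mention as an alternative) to be made rigorous.
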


In our experiment, we set $n=10,000$,$f=200,$ so that $\xi = 50$. We vary $\mu$ and $\lambda$ along the arc $\lambda^2 + \mu^2/\xi = 1 + \epsilon,$ for some $\epsilon>0,$ to ensure that we are within the allowed parameter regime. We also set $\epsilon = 3.25$ in all our experiments.

\textbf{Experiment environments. }All experiments are performed on a Linux Machine with 48 cores, 376GB of RAM, and an NVIDIA Tesla P100 GPU with 12GB of GPU memory. We use PyTorch Geometric\footnote{https://github.com/pyg-team/pytorch\_geometric}~\cite{fey2019fast} for graph-related operations and models, autodp\footnote{https://github.com/yuxiangw/autodp} for privacy accounting and Opacus\footnote{https://github.com/pytorch/opacus}~\cite{opacus} for the DP-optimizer. Our code is developed based on the GAP repository\footnote{https://github.com/sisaman/GAP}~\cite{sajadmanesh2023gap} and follows a similar experimental pipeline. 

\textbf{Hyperparameters. } For all methods, we set the hidden dimension to $64$, and use SeLU~\cite{klambauer2017self} as the nonlinear activation function. The learning rate is set to $10^{-3}$, and do not decay the weights. Training involves $100$ epochs for both pretraining and classifier modules. We use a dropout $0.5$ for nonprivate and edge GDP experiments and no dropout for the node GDP and $k$-neighbor GDP experiments. For DPDGC, we find that row-normalizing $\mathbf{W}^{(A)}$ to $10^{-8}$ and reducing $s$ accordingly gives better utility in practice (see Algorithm~\ref{alg:GDP-LINKX}, where we set $c=10^{-8}$). Following the analysis of DPDGC, we know that this changes the sensitivity of $\mathbf{A}\mathbf{W}^{(A)}$ to $\sqrt{2Dc}$ for node GDP and $\sqrt{kc}$ for $k$-neighbor GDP, respectively. Note that when $c=1$, we recover the results of Theorem~\ref{thm:Z-ndp} and \ref{thm:Z-nk}. For GAP, we tune the number of hops $L\in\{1,2,3\}$. For both DPDGC and GAP, the MLP modules have $2$ layers in general, except for the MLP$_A$ and MLP$_X$ modules in DPDGC which have  $1$ layer. For MLP, we use $3$ layers following the choice of~\cite{sajadmanesh2023gap}. We upper bound the out-degree by $D=100$ for all datasets, following the default choice stated in~\cite{sajadmanesh2023gap}. The batch size depends on the dataset size, where we choose $256$ for Facebook and Pubmed, and $64$ for the rest. Note that we train without mini-batching whenever the training does not involve a DP-optimizer and the method has better performance. For the cSBM experiments, we adopted the settings used for the Facebook dataset. We observe that DPDGC (nopriv) can have severe overfitting issues for $\phi=0$, which causes the results to be unstable. For this particular case, we change the learning rate of the embedding module to $10^{-5}$ in order to mitigate the issue.

\textbf{DP-optimizer. }We use DP-Adam by leveraging the Opacus library. We retain the default setting for all methods.

\section{Pseudocode for GAP and DPDGC}\label{apx:pseudo}
\begin{algorithm}
  \caption{Training process of GAP}
  \label{alg:GDP-SIGN}
  \begin{algorithmic}[1]
    \State \textbf{Model input:} node feature $\mathbf{X}$, adjacency matrix $\mathbf{A}$, training labels $\mathbf{Y}$.
    \State \textbf{Parameters:} noise std $s$, hops $L$, and max degree $D$.
    
    \If{edge GDP}
        \State Train DP-MLP$_X$ using a standard optimizer.
    \Else
        \State Train DP-MLP$_X$ using a DP-optimizer with group size of $1$.
    \EndIf
    \State $\mathbf{H}\leftarrow$ DP-MLP$_X(\mathbf{X})$. 
    \State $\mathbf{H}^{(0)}\leftarrow \text{row-normalized}(\mathbf{H})$ and cache $\mathbf{H}^{(0)}$.

    \If{node or $k$-neighbor GDP}
        \State Subsample $\mathbf{A}$ so that the out-degree (column-sum) is bounded by $D$.
    \EndIf
    
    \For{$1 \leq i \leq L$}
        \State $\mathbf{H}^{(i)} \leftarrow \text{row-normalized}(\mathbf{H}^{(i-1)}$ + $N(0,s^2))$.
    \EndFor
    \State $\mathbf{Z}\leftarrow \mathbin\Vert_{i=0}^L \mathbf{H}^{(i)}$ and cache $\mathbf{Z}$.
    \If{edge GDP}
        \State Train DP-MLP$_f$ using a standard optimizer.
    \Else
        \State Train DP-MLP$_f$ using a DP-optimizer with a group size $1$.
    \EndIf
  \end{algorithmic}
\end{algorithm}

\begin{algorithm}
  \caption{Training process of DPDGC}
  \label{alg:GDP-LINKX}
  \begin{algorithmic}[1]
    \State \textbf{Model input:} node feature $\mathbf{X}$, adjacency matrix $\mathbf{A}$, training labels $\mathbf{Y}$.
    \State \textbf{Parameters:} noise std $s$, maximum degree $D$ (for node GDP) or $k$ for $k$-level GDP, row-normalized constant $c$.

    \If{node GDP}
        \State Subsample $\mathbf{A}$ so that the out-degree (column-sum) is bounded by $D$.
    \EndIf
    \If{node GDP}
        \State $x=D+1$
    \ElsIf{$k$-neighbor GDP}
        \State $x=k+1$
    \Else
        \State $x=1$
    \EndIf
    \State $(\mathbf{W}^{(A)},\mathbf{b})\leftarrow$ Train DP-Emb using a DP-optimizer with group size $x$ and constrain $\mathbf{W}^{(A)}$ to be row-normalized to $c$.
    \State $s \leftarrow c\times s$.
    \State $\mathbf{Z}\leftarrow \text{row-normalized}(\mathbf{A}\mathbf{W}^{(A)}+N(0,s^2)+\mathbf{b})$ and cache $\mathbf{Z}$.
    \If{edge GDP}
        \State Train the remaining modules using a standard optimizer.
    \Else
        \State Train the remaining modules using a DP-optimizer with group size $1$.
    \EndIf
  \end{algorithmic}
\end{algorithm}

\end{document}